\documentclass[10pt,twocolumn,letterpaper]{article}

\usepackage[pagenumbers]{cvpr}
\usepackage{titletoc}

\usepackage{amsfonts}
\usepackage{array}
\usepackage{amsthm}
\usepackage{svg}
\usepackage{algorithm}
\usepackage{algpseudocode}
\usepackage{amsmath}
\usepackage{float}
\usepackage{amssymb}      

\usepackage{graphicx}
\usepackage{multirow}
\usepackage{subcaption}
\usepackage{array}
\usepackage{ragged2e}
\usepackage{amsthm}

\usepackage{booktabs}     
\usepackage{caption}      

\usepackage{subfiles}

\numberwithin{equation}{section}
\newtheorem{lemma}{Lemma}[section]

\newtheorem{proposition}[lemma]{Proposition}

\usepackage{longtable} 

\usepackage{tocloft}
\usepackage{colortbl}
\usepackage{xcolor}
\usepackage{booktabs}
\usepackage{float}
\usepackage[titletoc,title]{appendix}

\PassOptionsToPackage{dvipsnames,table}{xcolor}
\usepackage{xcolor} 
\usepackage{pifont}             

\newcommand{\cmark}{\textcolor{Green}{\ding{51}}} 
\newcommand{\xmark}{\textcolor{Red}{\ding{55}}}    

\numberwithin{equation}{section}

\definecolor{RankFirst}{RGB}{255,242,204}  
\definecolor{RankSecond}{RGB}{255,204,153} 
\definecolor{RankThird}{RGB}{221,235,247} 
\definecolor{coralPink}{HTML}{ED028C} 

\newcommand{\R}{\mathbb{R}}

\newcommand{\calA}{\mathcal{A}}
\newcommand{\calC}{\mathcal{C}}
\newcommand{\calJ}{\mathcal{J}}
\newcommand{\eps}{\varepsilon}

\newcommand{\pstar}{p^{\star}}
\newcommand{\Phitar}{\Phi}
\newcommand{\uone}{u^{(1)}}
\newcommand{\utwo}{u^{(2)}}
\newcommand{\utwotilde}{\tilde{u}^{(2)}}
\newcommand{\xone}{x^{(1)}}
\newcommand{\xtwo}{x^{(2)}}
\newcommand{\ztwo}{z^{(2)}}
\newcommand{\xsrc}{x_{\mathrm{src}}}
\newcommand{\xfinal}{x^{\star}}
\newcommand{\uchord}{\uone}
\newcommand{\ustrong}{\utwo}

\newcommand{\Izero}{\xsrc}
\newcommand{\Istrong}{\xtwo}
\newcommand{\Ichord}{\xone}
\newcommand{\Ifinal}{\xfinal}
\DeclareMathOperator*{\argmax}{arg\,max}

\definecolor{cvprblue}{rgb}{0.21,0.49,0.74}
\usepackage[pagebackref,breaklinks,colorlinks,allcolors=cvprblue]{hyperref}

\def\paperID{*****}
\def\confName{CVPR}
\def\confYear{2026}

\title{Bridging the Manifold Gap: Riemannian Residual Line Search for One-Step Image Editing}

\author{
{\textcolor{black}{Hongzhu Yi}}$^{1*}$, 
{\textcolor{black}{Zhongtian Luo}}$^{1*}$, 
{\textcolor{black}{Tong Li}}$^{2}$, 
{\textcolor{black}{Yiyan Fan}}$^{3}$, 
{\textcolor{black}{Jungang Xu}}$^{1\dagger}$
\\
$^{1}$UCAS\quad $^{2}$WashU\quad $^3$SHU\quad\\
{\small $^{*}$ These authors contributed equally}\quad
{\small $^{\dagger}$ Corresponding author: xujg@ucas.ac.cn}\\
{\small Project page: \textcolor{coralPink}{\url{https://github.com/yhz5613813/RRLS}}}
}

\begin{document}

\twocolumn[{%
\maketitle
\vspace{-7mm}
    
    \begin{center}
    \centering
    \includegraphics[width=\linewidth]{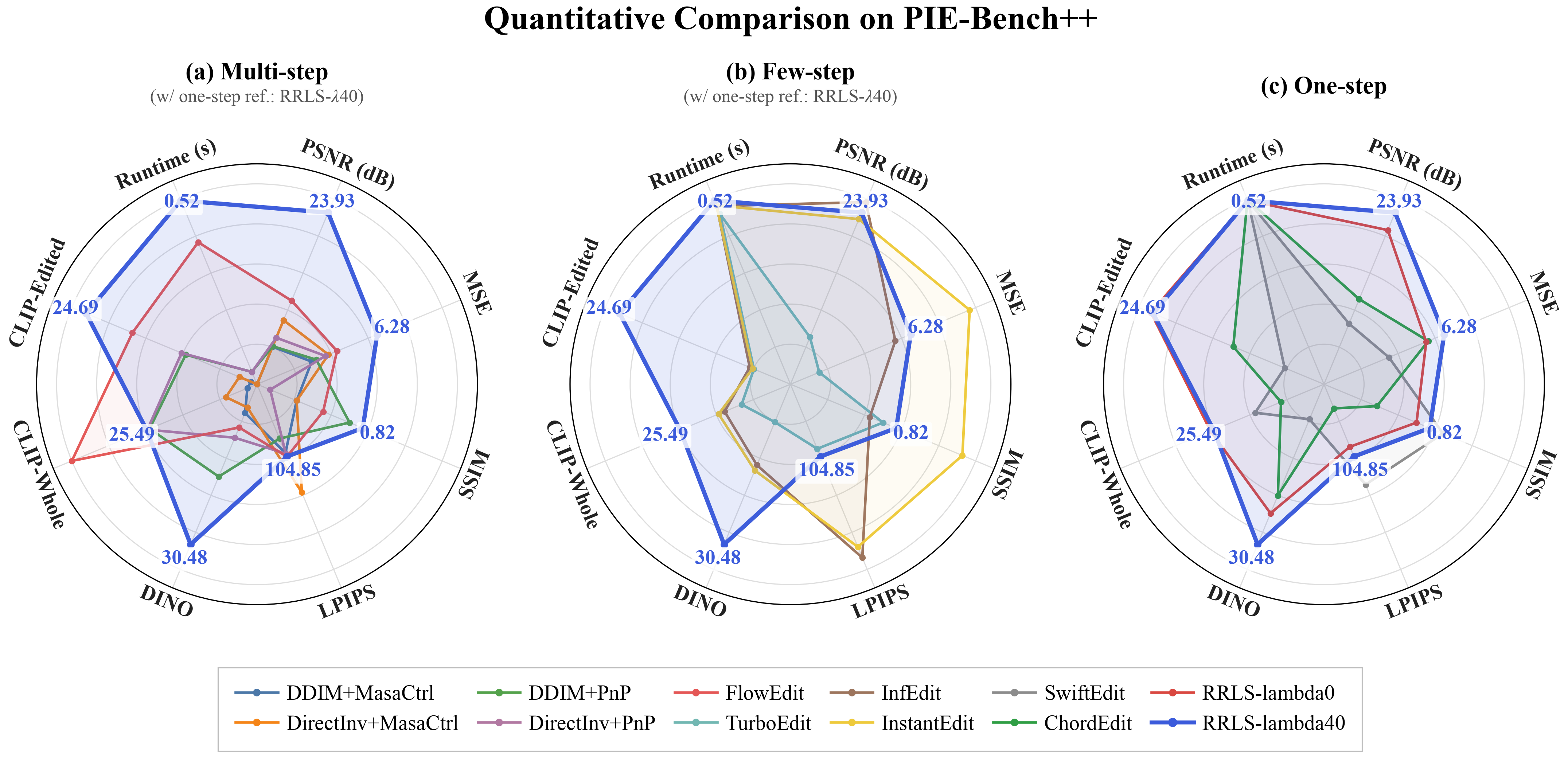}
    \captionof{figure}{\textbf{Quantitative radar chart comparisons on PIE-Bench++ across varying generation step regimes.} We evaluate all methods across seven key dimensions, including text-alignment (CLIP-Whole, CLIP-Edited), structural and perceptual fidelity (PSNR, MSE, SSIM, LPIPS, DINO), and efficiency (Runtime). For clear calibration, our method RRLS (represented by the outermost bold blue profile) serves as the baseline reference across all subplots. Compared to (a) multi-step, (b) few-step, and (c) other one-step editing paradigms, RRLS achieves a superior Pareto front, unlocking an optimal trade-off by drastically eliminating runtime overhead while consistently matching or maintaining leading structural consistency and semantic alignment.}
    \label{fig:radar}
    \end{center}
}]
\begin{abstract}


One-step diffusion editors are fast because they avoid inversion and iterative optimization, but a single transport update must be aggressive enough to realize the target prompt and conservative enough to preserve the source image—and no fixed update strength satisfies both demands across edit types. We treat this tension as a post-hoc candidate-selection problem on top of energy-field transport rather than as a new editing model. Our proposed method, Riemannian Residual Line Search, first builds a stronger edit by estimating the local time curvature of the prompt-delta field and projecting the corrected direction back onto the update norm of the original first-order energy-field transport estimation. It then forms a small residual path from the source image to this strong edit, retains the original first-order output as one candidate, and picks the final image by maximizing target-prompt CLIP alignment. On a 700-sample PIE-Bench++ evaluation across 10 edit type IDs, our method achieves state-of-the-art (SOTA) performance among current one-step update algorithms.
\end{abstract}

\section{Introduction}

In recent years, text guided image editing has progressed from computationally expensive pipelines based on inversion and prompt optimization toward feed forward and few step approaches~\citep{meng2021sdedit,hertz2022prompt,mokady2023null,brooks2023instructpix2pix,ju2024pnp,xu2024inversion,deutch2024turboedit,samuel2025lightning,feng2025dit4edit}. In practical applications, an image editing model is expected to respond rapidly to a target prompt while preserving the remaining content of the source image. One step editing models represent the extreme of this trend in efficiency, yet they also reveal the fundamental challenge of image editing in its most pronounced form. A single update direction and a single step size must accommodate a diverse range of editing tasks, including object replacement, attribute manipulation, style transformation, and background modification. When the update magnitude is insufficient, the edited result cannot faithfully satisfy the target prompt. In contrast, an excessively strong update often causes unintended changes in irrelevant regions, leading to deviations in background, pose, spatial layout, or subject identity.

\begin{figure}[H]
    \centering
    \includegraphics[width=\linewidth]{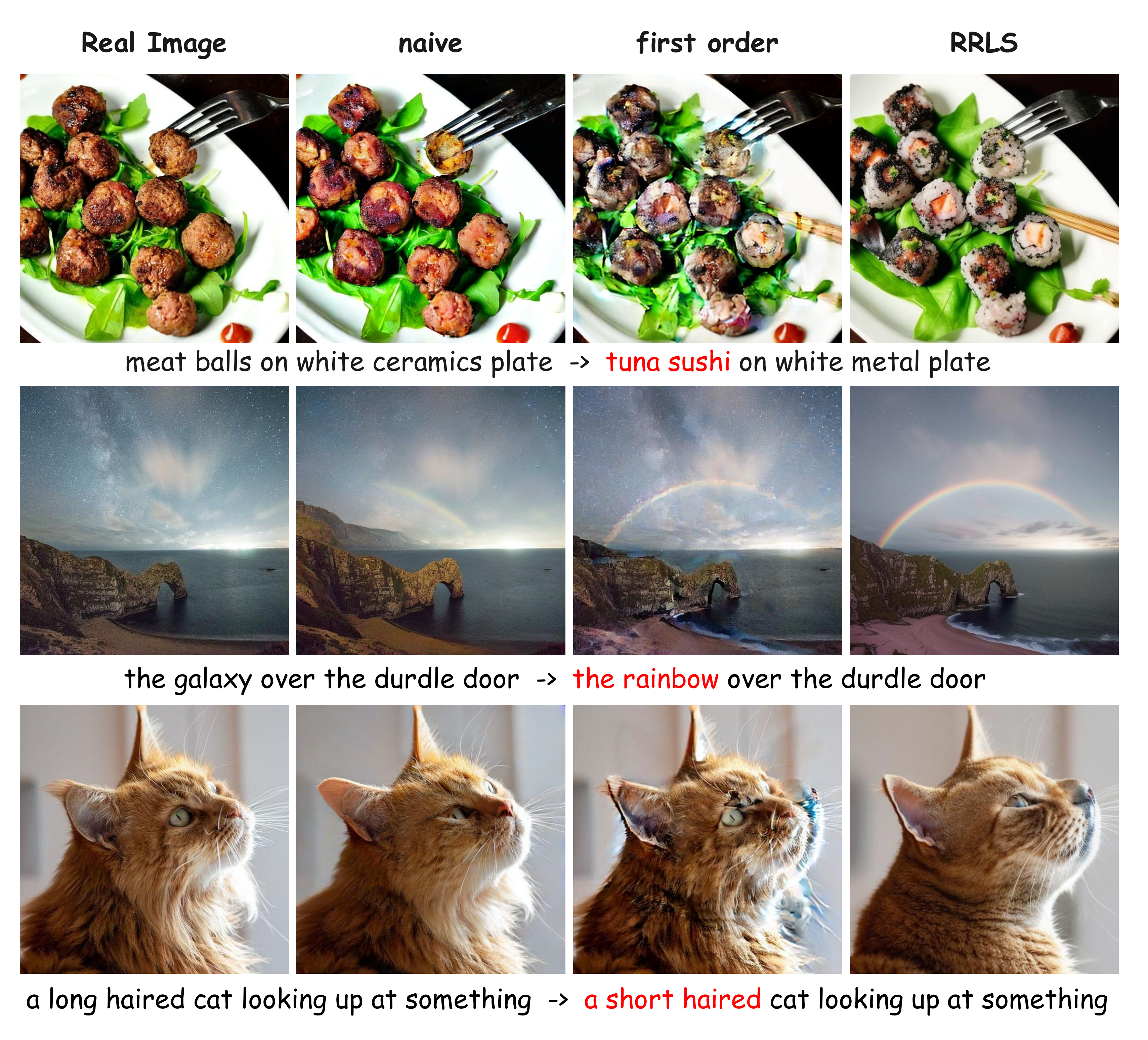}
    \caption{\textbf{Qualitative comparison of different one-step image editing paradigms.} From top to bottom, the rows display various semantic shifting tasks including category substitution, landscape editing, and attribute modification. Compared to the naive baseline and first-order trajectory correction, our RRLS framework consistently archives superior text-to-image semantic alignment while strictly preserving unedited regional fidelity and original layouts.}
\end{figure}


ChordEdit~\citep{lu2026chordedit} offers a more compelling solution to this paradigm. It models image editing as a low-energy transport process within the latent space, estimating the text-prompt-driven chord direction through denoising predictions at adjacent noise levels.


However, current methods that model image editing via an energy transport process suffer from a critical limitation: they rely on a first-order difference function to model the transport dynamics. This reliance leads to a severe loss of information when a one-to-one injective mapping between the two energy distributions does not exist. Consequently, when the discrepancy between the two distributions is large, the editing capability degenerates significantly. Moving beyond the conventional first-order approximation, we introduce a joint first- and second-order difference approximation framework to achieve robust and stable image editing. We designate this method as Riemannian Residual Line Search (RRLS).


In terms of implementation, we extend the ChordEdit paradigm by formulating text-driven image editing as a minimum-energy transport problem~\citep{villani2009optimal} within the latent space, where the transport dynamics are governed by the Benamou-Brenier dynamic OT formulation. Given that this partial differential equation is analytically intractable over complex diffusion manifolds, we break through traditional single linear approximations and creatively approximate the transport vector field via a synergistic combination of a first-order Taylor expansion and a second-order curvature correction. This dual approximation mechanism expands the semantic editing boundaries through second-order variations while retaining the structural robustness of first-order updates, effectively constructing a discrete candidate solution set. On this basis, we cast the selection of the final image as a regularized optimization problem: by introducing a Lagrange multiplier to balance the target alignment potential against the source drift penalty, we efficiently and monotonically lock in the globally optimal solution from the candidate set, completely dispensing with gradient updates or model finetuning.

\paragraph{Contributions.}



\begin{itemize}[leftmargin=*]
    \item We propose a novel image editing framework grounded in a second-order approximation of the energy field, complemented by rigorous theoretical proofs.
    \item RRLS achieves comprehensive SOTA performance across all evaluation metrics compared to existing one-step editing alternatives.
\end{itemize}

\section{Related Work}
\label{sec:related}

\paragraph{Text-guided diffusion editing.}
Diffusion and latent diffusion models~\citep{ho2020denoising,song2020denoising,song2020score,dhariwal2021diffusion,saharia2022photorealistic,rombach2022high,podell2024sdxl,liu2022flow,lipman2022flow,esser2024scaling} have become the standard backbones for text-guided generation and editing, with adversarial distillation further accelerating inference toward a single denoising step~\citep{song2023consistency,sauer2024adversarial}. Existing editing methods built upon these backbones navigate a broad spectrum of quality-speed trade-offs. For instance, some approaches employ noise-then-denoise pipelines under novel conditioning~\citep{meng2021sdedit,huberman2024edit}, while others leverage attention-control mechanisms to preserve layout by manipulating cross-attention layers~\citep{hertz2022prompt,tumanyan2023plug,cao2023masactrl,avrahami2025stable}. Additionally, instruction-tuned editors have emerged to learn explicit edit behaviors directly from synthetic instruction pairs~\citep{brooks2023instructpix2pix}.


\paragraph{Inversion-Free and One-Step Editing.}
Prior work generally aligns along two distinct methodologies. The first paradigm invests extensive per-image computation via DDIM inversion, null-text inversion~\citep{song2020denoising,mokady2023null,samuel2025lightning}, or prompt optimization~\citep{kawar2023imagic} to enhance reconstruction fidelity, albeit at the expense of inference latency. Conversely, the second paradigm bypasses the inversion process entirely, generating the edited output within a single forward update~\citep{nguyen2025swiftedit,kim2025flowalign,dai2026bifm,yan2025eedit}. Representing this latter line of research, ChordEdit~\citep{lu2026chordedit} operates by estimating a low-energy transport direction from the source prompt to the target prompt.

\begin{figure*}
    \centering
    \includegraphics[width=\linewidth]{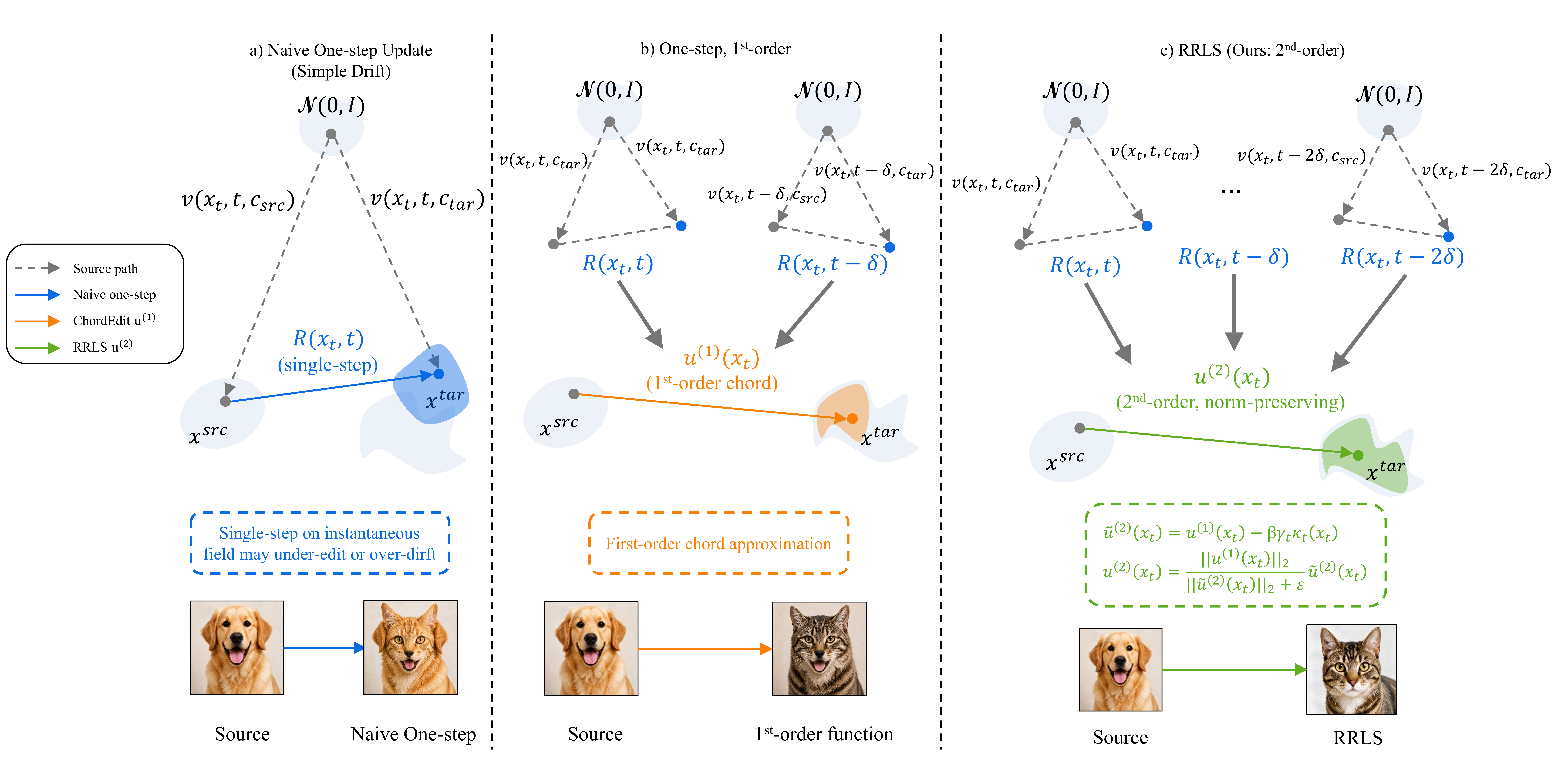}
    \caption{\textbf{Overview and comparison of different one-step image editing paradigms.} 
    \textbf{(a) Naive One-step Update} directly applies the instantaneous velocity field $v(x_t, t, c_{\mathrm{tar}})$ from the source path, which heavily relies on a simple linear drift and frequently suffers from either under-editing or severe over-drift. 
    \textbf{(b) One-step, 1st-order Method} approximates the transport velocity via a first-order chord vector $u^{(1)}(x_t)$ calculated from denoising predictions at adjacent noise levels, yet remains limited by linear differential constraints. 
    \textbf{(c) RRLS (Ours)} leverages a collaborative first- and second-order difference approximation, utilizing multiple structural residuals $R(x_t, \cdot)$ across historic time steps to capture the high-order geometric curvature $\kappa_t(x_t)$ of the energy field. By rectifying the tangent trajectory with a norm-preserving projection, our second-order update $\mathcal{u}^{(2)}(x_t)$ substantively expands the semantic boundaries while strictly penalizing unwanted distribution drift, achieving robust and stable one-step transport from $x^{\mathrm{src}}$ to $x^{\mathrm{tar}}$.}
    \label{fig:main_pipeline}
\end{figure*}

\section{Method}

This section begins with the formal definition of the background, extending to the practical application of the energy transport process in image editing. It demonstrates the superiority of the proposed second-order approximation method over its first-order counterpart in resolving this specific problem, and provides a comprehensive exposition of the entire workflow for our RRLS methodology.

\label{sec:method}

\subsection{Preliminaries}
\label{sec:prelim}

We work with a pre-trained text-to-image flow model~\citep{liu2022flow,lipman2022flow,esser2024scaling}. Let $t\in[0,1]$ denote time and $x_t\in\mathbb{R}^d$ the latent state, with $p_0(\cdot\mid c)$ the data distribution conditioned on a prompt $c$ and $p_1$ the prior. The model defines a conditional probability flow
\begin{equation}
\frac{d x_t}{dt}\;=\;v(x_t,t,c).
\end{equation}
Given source and target prompts $c_{\rm src}$ and $c_{\rm tar}$, image editing seeks to transport a source latent $x_0^{\rm src}\sim p_0(\cdot\mid c_{\rm src})$ to a target latent $x_0^{\rm tar}\sim p_0(\cdot\mid c_{\rm tar})$ via the prompt-induced residual field
\begin{equation}
\Delta v(x_t,t)\;=\;v(x_t,t,c_{\rm tar})-v(x_t,t,c_{\rm src}),
\label{eq:prompt_delta}
\end{equation}
which represents the ideal continuous-time control between the two conditional dynamics. We write $\xsrc=\mathrm{Dec}(x_0^{\rm src})$ for the decoded source image, reserving the time-subscripted notation $x_t$ for latent states and the symbol $x$ (without time subscript) for variables in image space. One-step editing operates at a single time index $t$ where the corresponding state is $x_t$.

\subsection{Energy Transport Approximation}
\label{sec:transport}


Define the transport density $\rho_t$ as the density evolving from the source boundary $\rho_0=p_0(\cdot\mid c_{\rm src})$ to the target boundary $\rho_1=p_0(\cdot\mid c_{\rm tar})$, and let $u_t$ be the editing vector field driving this transport. The minimum-energy field is the solution of the Benamou--Brenier dynamic OT problem~\citep{benamou2000computational}
\begin{equation}
\begin{aligned}
\min_{\rho,\,u}\quad
& \int_0^1\!\!\int \tfrac{1}{2}\,\|u_t(x)\|^2\,\rho_t(x)\,dx\,dt, \\
\text{s.t.}\quad
& \partial_t \rho_t(x)+\nabla_x\!\cdot\!\bigl(\rho_t(x)\,u_t(x)\bigr)=0.
\end{aligned}
\label{eq:ot}
\end{equation}


The ideal field $u_t$ is unknown. We can only access it through the prompt-delta proxy
\begin{equation}
\mathbf{R}(x_t,t)\;=\;\Delta v(x_t,t),
\label{eq:proxy_field}
\end{equation}
and we adopt the measurement model
\begin{equation}
\mathbf{R}(x_t,t)\;=\;u_t(x_t)+\varepsilon_t,\qquad \mathbb{E}[\varepsilon_t]=0,
\label{eq:measurement_model}
\end{equation}
in which $\mathbf{R}$ equals the true editing field up to a zero-mean noise term $\varepsilon_t$.

\subsection{First and Second-Order Updates}
\label{sec:strong}

\paragraph{Notation.} Throughout this section, the parenthesized superscript $(k)$ on a quantity denotes the \emph{order of the temporal finite-difference approximation} used to construct it, not differentiation.

\paragraph{First-order update.} A standard one-step approximation combines the proxy field at two neighboring time steps to obtain the first-order chord direction
\begin{equation}
\uone(x_t)=\frac{\delta d_t(x_t)+t d_{t-\delta}(x_t)}{t+\delta},
\quad
d_t(x)=\mathbf{R}(x,t).
\label{eq:chord}
\end{equation}
The direction $\uone$ is the basic one-step transport update. While computationally efficient, it implicitly assumes that the prompt-delta field is approximately chord-linear between the two selected time steps.

\paragraph{Second-order curvature term.} When the prompt-delta field bends sharply across time, the two-point chord can be either too conservative or poorly aligned with the desired edit. We estimate the local time curvature with the second finite difference,
\begin{equation}
\kappa_t(x_t)=d_t(x_t)-2d_{t-\delta}(x_t)+d_{t-2\delta}(x_t),
\label{eq:curvature}
\end{equation}
which is second-order accurate in $\delta$ (Prop.~\ref{prop:fd_curv}). We compare its magnitude against $\|\uone\|$ through a trust coefficient
\begin{equation}
r_t=\frac{\|\kappa_t(x_t)\|_2}{\|\uone(x_t)\|_2+\eps},
\quad
\tau_t=\frac{1}{1+\eta\,r_t}\in(0,1],
\label{eq:trust}
\end{equation}
where $\tau_t$ down-weights the second-order correction when the curvature dominates. The unnormalized second-order direction is
\begin{equation}
\utwotilde(x_t)=\uone(x_t)-\beta\tau_t\kappa_t(x_t).
\label{eq:corrected}
\end{equation}


\paragraph{Norm-preserving second-order update.} Directly using $\utwotilde$ would change both the direction and the magnitude of the update. To decouple these two effects, we project $\utwotilde$ back onto the sphere of radius $\|\uone\|$:
\begin{equation}
\utwo(x_t)\;=\;\frac{\|\uone(x_t)\|_2}{\|\utwotilde(x_t)\|_2+\eps}\,\utwotilde(x_t).
\label{eq:strong_update}
\end{equation}
By construction, $\utwo$ retains the chord magnitude of $\uone$ (Prop.~\ref{prop:norm_pres}) but follows the curvature-corrected direction, which stays within an $\arctan(\beta/\eta)$ angular cone around $\uone$ (Prop.~\ref{prop:angle_bound}). The corresponding second-order latent and decoded image are
\begin{equation}
\ztwo=x_0^{\rm src}+s\utwo(x_t), \quad
\xtwo=\mathrm{Dec}(\ztwo).
\label{eq:strong_image}
\end{equation}

\begin{figure*}
    \centering
    \includegraphics[width=\linewidth]{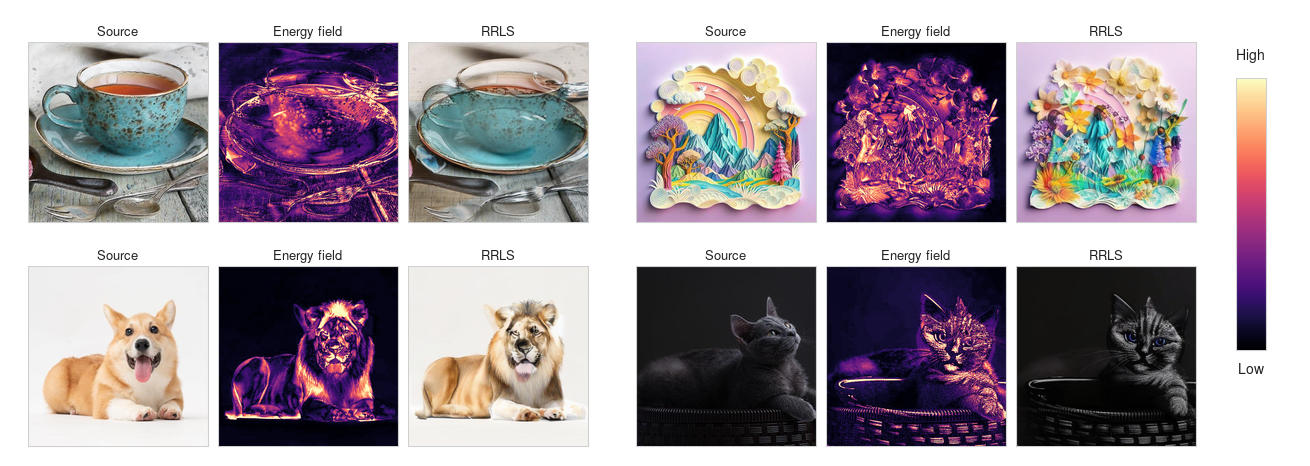}
    \caption{\textbf{Visualization of the learned energy fields and the corresponding editing results by RRLS.} For each group, we present the source image (left), the computed latent energy field map (middle), and the final one-step editing outcome (right). The colorbar scales from low (dark purple) to high (light yellow) energy intensity. Crucially, the energy fields precisely localize the semantic regions requiring transformation (e.g., the texture of the cup, the structures of the landscape, and the facial/body features of animals) while maintaining low energy on invariant background regions, thereby facilitating structure-preserving editing.}
    \label{fig:placeholder}
\end{figure*}

\paragraph{The second-order error is negligible.} Fix $x_t$ and write $g(s)=\mathbf{R}(x_t,s)\in C^3$. Assume the ideal field $u_t$ is
first-order consistent (i.e.\ it matches the construction up to $O(\delta)$). Then
\begin{equation}
\begin{aligned}
    & \mathcal{E}_1=\bigl\lVert u_t-\uone\bigr\rVert=\Theta\!\bigl(\delta^2\lVert g''\rVert\bigr), \\
    & \mathcal{E}_2=\bigl\lVert u_t-\utwo\bigr\rVert=O(\delta^3), \quad \frac{\mathcal{E}_2}{\mathcal{E}_1}=O(\delta).
\end{aligned}
\end{equation}

Taylor-expanding about $s=t$ gives
$g(t-\delta)=g-\delta g'+\tfrac{\delta^2}{2}g''+O(\delta^3)$,
where every quantity is evaluated at $t$.

Substituting this expansion into the chord
definition~\eqref{eq:chord} and aligning the $O(\delta)$ term with $u_t$, the
$O(1)$ and $O(\delta)$ contributions cancel, and the leading residual is the
curvature term
\begin{equation}
u_t-\uone=C\,\delta^2 g''+O(\delta^3),\qquad C\neq 0.
\end{equation}
The chord direction implicitly assumes the field is \emph{linear between the two
time steps}; its error is therefore exactly the discarded curvature
$\delta^2 g''$, which fails to vanish whenever the field bends in time
($g''\neq 0$).

The second finite difference yields
\begin{equation}
\kappa_t=g-2\,g(t-\delta)+g(t-2\delta)=\delta^2 g''+O(\delta^3),
\end{equation}
which is \emph{of the same order and direction} as the residual above. From
$\utwotilde=\uone-\beta\tau_t\kappa_t$, calibrating the effective gain so that
$\beta\tau_t=-C$ cancels the $\delta^2$ term,
\begin{equation}
u_t-\utwotilde=O(\delta^3).
\end{equation}
The norm-preserving projection~\eqref{eq:strong_update} only rotates the
direction and introduces no $O(\delta^2)$ error, hence
$\mathcal{E}_2=\lVert u_t-\utwo\rVert=O(\delta^3)$.

Therefore $\mathcal{E}_2/\mathcal{E}_1=O(\delta)\to 0$. In one-step editing the
step size $\delta$ is \emph{finite and coarse} and cannot be driven to zero, so
$\mathcal{E}_1=\Theta(\delta^2\lVert g''\rVert)$ is non-negligible---the larger
the mismatch between the source and target manifolds (the larger $\lVert g''\rVert$),
the more pronounced the error, which is precisely the origin of the
under-editing / over-drift behavior of the first-order update. By contrast,
$\mathcal{E}_2$ is one order higher and is therefore negligible.


In Figure \ref{fig:compare of diff method toy}, we illustrate the key differences among the naive one-step update, the first-order approximation of energy-field one-step update, and our proposed second-order variant. The primary advantage of the second-order approximation lies in its capability to robustly adapt to scenarios where the manifold discrepancy between two distinct energy fields is excessively large.

\subsection{Residual Candidate Path}
\label{sec:candidates}

Although the second-order update $\xtwo$ generally improves target semantic alignment, it often sacrifices source-image preservation and is therefore not used directly as the final output. Instead, RRLS treats $\xtwo$ as the endpoint of a residual interpolation path emanating from the source image $\xsrc$, and parameterizes the path by a residual coefficient $\alpha\in[0,1]$:
\begin{equation}
x_\alpha=
(1-\alpha)\xsrc+
\alpha\xtwo,
\label{eq:residual_path}
\end{equation}


Sweeping $\alpha$ traces a one-parameter family of candidates that smoothly interpolates between strict preservation $(\alpha=0)$ and the full second-order edit $(\alpha=1)$, with source drift growing quadratically in $\alpha$ (Prop.~\ref{prop:drift_quadratic}). To retain access to the first-order solution $\xone$, we take the candidate set to be the union of $\xone$ with the discrete residual sweep,
\begin{equation}
\calC(\xsrc,\pstar)=
\bigl\{\xone\bigr\}\cup\bigl\{x_\alpha\}.
\label{eq:candidate_set}
\end{equation}


Including $\xone$ guarantees that the selector output is never worse than the first-order baseline on its own utility (Prop.~\ref{prop:dominance}). Additionally, it is worth noting that this mechanism does not compromise the definition of RRLS as a one-step update framework. Both the first and second-order approximation processes generate images in a single step and can be executed in parallel. Furthermore, the generation of the candidate set is merely a post-processing procedure involving simple weighted computations, which incurs negligible overhead in terms of both computational time and memory footprint.


\subsection{Regularized Selection}
\label{sec:selection}


For a candidate image $x\in\calC(\xsrc,\pstar)$, we measure its drift from the source by the per-pixel mean squared error
\begin{equation}
\begin{aligned}
D_{\mathrm{src}}(x,\xsrc) &
\;=\;
\frac{1}{HWC}\,\bigl\lVert\,x-\xsrc\,\bigr\rVert_{2}^{2},
\qquad \\
& x,\xsrc\in[0,1]^{H\times W\times C}.
\label{eq:source_drift}
\end{aligned}
\end{equation}


Let $\Phitar(x,\pstar)\in\R$ denote a target-alignment potential that scores how well a candidate image $x$ realizes the target prompt $\pstar$. We treat $\Phitar$ abstractly throughout this section, requiring only that larger values indicate better alignment; in our experiments we instantiate $\Phitar$ as the CLIP cosine similarity (\S\ref{sec:experiments}). The selection utility couples this potential with the preservation penalty through a single Lagrange multiplier $\lambda\ge 0$,
\begin{equation}
\calJ_{\lambda}(x;\pstar,\xsrc)=
\Phitar(x,\pstar)-
\lambda
D_{\mathrm{src}}(x,\xsrc).
\label{eq:utility}
\end{equation}


The final image is then obtained by the regularized line-search
\begin{equation}
\;
\xfinal
\;=\;
\argmax_{x\,\in\,\calC(\xsrc,\,\pstar)}\;\calJ_{\lambda}(x;\pstar,\xsrc).
\;
\label{eq:final}
\end{equation}
Equation~\eqref{eq:final} is a per-image, one dimensional discrete search over $\calC$ and is solved exactly in closed form by enumeration; no gradient updates or model-weight modifications are required. The maximizer is well-defined under any deterministic tie-breaking rule (Prop.~\ref{prop:wellposed}), is Pareto-efficient on $\calC$ for the bicriterion $(\Phitar\!\uparrow,\,D_{\mathrm{src}}\!\downarrow)$ (Prop.~\ref{prop:pareto}), and admits the constrained-form interpretation that $\lambda$ is the Lagrange dual of an endogenous drift cap (Prop.~\ref{prop:lagrangian}). Sweeping $\lambda$ traces the Pareto frontier monotonically (Prop.~\ref{prop:monotone}), and the selector is $2\eta$-stable under any bounded perturbation of $\Phitar$ (Prop.~\ref{prop:stability}), which decouples the analysis from any particular choice of scorer.

\begin{figure}
    \centering
    \includegraphics[width=\linewidth]{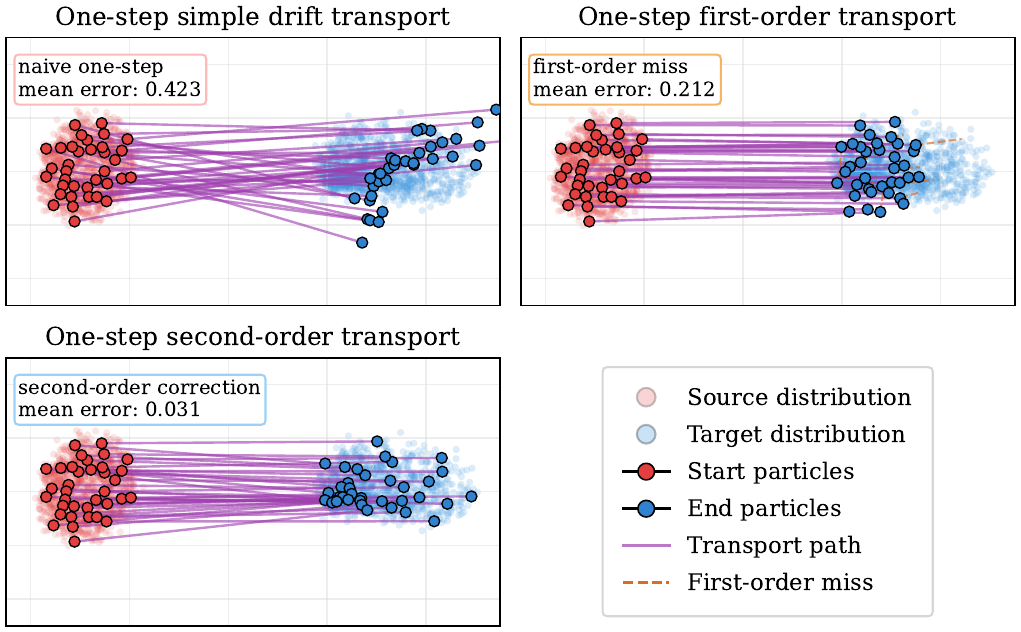}
    \caption{\textbf{2D Example of One-Step Image Editing.} One-step update transport can quickly achieve injective alignment of manifolds, whereas the naive one-step update yields a lower degree of manifold alignment. Energy fields using a first-order approximation struggle to adapt when the source and target manifolds differ substantially in shape, while energy fields using a second-order approximation adapt to such cases much more reliably.}
    \label{fig:compare of diff method toy}
\end{figure}

\begin{table*}[t]
\centering
\caption{\textbf{Quantitative comparison on PIE-Bench++}.
\textbf{T-free}: Training-free. \textbf{I-free}: Inversion-free.}
\label{tab:main_comparison_final}
\resizebox{\textwidth}{!}{%
\begin{tabular}{@{}l l | ccccc | cc | cc | cc@{}}
\toprule
\multirow{2}{*}{\textbf{Type}} & \multirow{2}{*}{\textbf{Method}} &
\multicolumn{5}{c|}{\textbf{Consistency}} &
\multicolumn{2}{c|}{\textbf{CLIP Semantics}} &
\multicolumn{2}{c|}{\textbf{Properties}} &
\multicolumn{2}{c}{\textbf{Efficiency}} \\
\cmidrule(lr){3-7} \cmidrule(lr){8-9} \cmidrule(lr){10-11} \cmidrule(lr){12-13}
& &
\textbf{PSNR}$\uparrow$ &
\textbf{MSE}$_{10^3}$ $\downarrow$ &
\textbf{LPIPS}$_{10^3}$ $\downarrow$ &
\textbf{SSIM}$\uparrow$ &
\textbf{DINO dist}$_{10^3}$ $\downarrow$ &
\textbf{Whole}$\uparrow$ &
\textbf{Edited}$\uparrow$ &
\textbf{T-free} &
\textbf{I-free} &
\textbf{Step}$\downarrow$ &
\textbf{Runtime}$\downarrow$ \\
\midrule

\multirow{5}{*}{\shortstack[c]{Multi-step \\ ($\ge$ 30 steps)}} &
DDIM + MasaCtrl~\cite{song2020denoising,cao2023masactrl} &
21.25 & 8.58 & 106.59 & 0.77 & 38.29 &
24.13 & 21.13 &
\cmark & \xmark &
50 & 55.20 \\

& Direct Inversion + MasaCtrl~\cite{ju2024pnp,cao2023masactrl} &
21.78 & 7.99 & 87.38 & 0.77 & 38.62 &
24.42 & 21.38 &
\cmark & \xmark &
50 & 79.10 \\

& DDIM + PnP~\cite{song2020denoising,tumanyan2023plug} &
21.26 & 8.42 & 113.58 & 0.81 & 34.51 &
25.45 & 22.54 &
\cmark & \xmark &
50 & 28.01 \\

& Direct Inversion + PnP~\cite{ju2024pnp,tumanyan2023plug} &
21.43 & 8.10 & 106.26 & 0.75 & 36.82 &
25.48 & 22.63 &
\cmark & \xmark &
50 & 28.03 \\

& FlowEdit (SD3)~\cite{kulikov2025flowedit} &
22.17 & 7.69 & 104.81 & 0.79 & 37.43 &
\textbf{26.64} & \underline{23.69} &
\cmark & \cmark &
33 & 7.22 \\

\midrule

\multirow{3}{*}{\shortstack[c]{Few-step \\ (4 steps)}} &
TurboEdit (SDXL-Turbo)~\cite{deutch2024turboedit} &
21.44 & 9.49 & 108.60 & 0.81 & 37.75 &
24.66 & 21.79 &
\cmark & \cmark &
\underline{4} & 2.69 \\

& InfEdit (SD1.4)~\cite{xu2024inversion} &
\textbf{24.14} & 6.82 & \textbf{55.69} & 0.80 & 35.19 &
24.89 & 21.88 &
\cmark & \cmark &
\underline{4} & 1.41 \\

& InstantEdit (PeRFlow-SD1.5)~\cite{gong2025instantedit} &
23.80 & \textbf{4.21} & \underline{60.92} & \textbf{0.87} & 34.88 &
24.97 & 21.82 &
\cmark & \xmark &
\underline{4} & 1.30 \\

\midrule

\multirow{4}{*}{\shortstack[c]{One-step}} &
SwiftEdit (SwiftBrush-v2)~\cite{nguyen2025swiftedit} &
21.71 & 8.22 & 91.22 & 0.83 & 37.92 &
24.93 & 21.85 &
\xmark & \xmark &
\textbf{1} & 0.54 \\

& ChordEdit &
22.20 & 6.84 & 128.25 & 0.78 & 33.39 &
24.58 & 22.96 &
\cmark & \cmark &
\textbf{1} & \textbf{0.38} \\

& RRLS ($\lambda=0$) &
23.58 & 6.91 & 109.59 & 0.81 & \underline{32.33} &
25.53 & \textbf{24.74} &
\cmark & \cmark &
\textbf{1} & \underline{0.52} \\

& RRLS ($\lambda=40$) &
\underline{23.93} & \underline{6.28} & 104.85 & \underline{0.82} & \textbf{30.48} &
\underline{25.49} & 24.69 &
\cmark & \cmark &
\textbf{1} & \underline{0.52} \\

\bottomrule
\end{tabular}
}
\end{table*}

\section{Experiments}
\label{sec:experiments}


In this section, we present our primary experimental setups and report the corresponding results.

\subsection{Setup}

We evaluate our method on the PIE-Bench++ dataset~\citep{huang2024paralleledits}, which extends the PIE-Bench protocol introduced with PnP Inversion~\citep{ju2024pnp}, strictly following its native evaluation protocol. The underlying editing model is initialized with the SD-Turbo weights~\citep{sauer2024adversarial}. For the second-order approximation method, we uniformly set the curvature correction coefficient to $0.30$, the trust region intensity to $1.00$, and the edit step size scaling factor to $1.50$. On top of this second-order approximation, RRLS further constructs the candidate set with $\mathcal{A}=\{0.55, 0.65, 0.75, 0.85\}$.


We use the editing masks provided by PIE-Bench++ and compute PSNR, LPIPS, MSE, and SSIM on the unedited regions to evaluate content preservation~\citep{zhang2018unreasonable,wang2004image}. In addition, we report whole-image target CLIP similarity, edit-region target CLIP similarity, and DINO-based self-similarity structure distance to assess semantic alignment and structural preservation in the edited results~\citep{radford2021learning,caron2021emerging}.

\subsection{Main Result}




Table~\ref{tab:main_comparison_final} presents a comprehensive quantitative evaluation on the PIE-Bench++ benchmark, comparing our proposed RRLS with existing state-of-the-art multi-step, few-step, and training/inversion-free one-step image editing methodologies.

Among the one-step generation variants, our RRLS framework introduces a flexible trade-off between structural preservation and semantic alignment via the regularization weight $\lambda$. Under the aggressive editing configuration where the regularization penalty is omitted, RRLS ($\lambda=0$) achieves a remarkable CLIP (Edited) score of 24.74, outperforming all benchmark one-step, few-step, and multi-step methods in semantic alignment except for the 33-step FlowEdit. Concurrently, it maintains a robust structural layout, yielding a PSNR of 23.58 and an SSIM of 0.81, which substantially surpasses traditional one-step approaches and multi-step inversion baselines. Under the balanced configuration incorporating the norm-preserving penalty, RRLS ($\lambda=40$) achieves the optimal synergy across all metrics; it not only delivers the top structural fidelity among all one-step frameworks, securing the best DINO distance (30.48) in the table, but also attains the second-best overall PSNR (23.93). Crucially, this rigorous structural control does not severely compromise its editing capability, as its CLIP (Edited) score remains at a high level of 24.69.

Compared to the closest one-step paradigm, ChordEdit, both variants of RRLS deliver significant performance leaps across almost all dimensions. Specifically, RRLS ($\lambda=40$) elevates the PSNR from 22.20 to 23.93 and dramatically compresses the LPIPS and DINO distances, which convincingly validates that our collaborative first- and second-order geometric curvature correction successfully suppresses unwanted distribution drift. Furthermore, while multi-step methods such as MasaCtrl and PnP incur heavy computational burdens with runtimes exceeding 28 seconds, RRLS completes the entire transport process in a single step with a highly competitive runtime of only 0.52 seconds. This eloquently demonstrates that RRLS successfully breaks the performance bottleneck of one-step image editing, establishing a superior Pareto frontier between generation quality and inference efficiency.

\section{Ablation Study}


In this section, we perform a series of ablation experiments on the key parameters of our framework, providing a fair and rigorous comparison to isolate the effects of each architectural component.

\subsection{Selector Isolation}


Although the first-order approximation method, ChordEdit, incorporates its own post-processing enhancement steps, our post-processing strategy fundamentally differs. Specifically, we employ a selection mechanism to filter the optimal result from a candidate set, the efficacy of which hinges on the fact that the discrete objective function characterizing this candidate set is inherently convex. In contrast, the candidate sets generated by first-order methods do not necessarily exhibit convexity, thereby rendering their post-processing selectors potentially ineffective.


\begin{table}[t]
\centering
\setlength{\tabcolsep}{2pt} 
\caption{\textbf{Ablation study on selector decoupling.} The ChordPath selector utilizes the identical CLIP-MSE utility function as RRLS; however, its residual endpoint is derived from images generated by the first-order approximation method rather than the second-order counterpart. The MSE-only baseline leverages the same candidate set as RRLS but omits the semantic CLIP term, with the scaling factor uniformly set to $\alpha=0.55$ in this evaluation.}
\label{tab:selector_isolation}
\resizebox{0.48\textwidth}{!}{\begin{tabular}{@{}l | cccc | cc@{}}
\toprule
\multirow{2}{*}{\textbf{Method}} &
\multicolumn{4}{c|}{\textbf{Consistency}} &
\multicolumn{2}{c}{\textbf{CLIP Semantics}} \\
\cmidrule(lr){2-5} \cmidrule(lr){6-7}
&
\textbf{PSNR} &
\textbf{LPIPS}$_{10^3}$  &
\textbf{SSIM} &
\textbf{DINO}$_{10^3}$  &
\textbf{Whole} &
\textbf{Edited} \\
\midrule
ChordEdit           & 22.20 & 128.25 & 0.78 & 33.39 & 24.58 & 22.96 \\
ChordEdit + selector & 22.43 & 109.51 & 0.80 & 23.82 & 24.98 & 24.18 \\
MSE-only residual   & \textbf{25.92} & \textbf{83.38} & \textbf{0.87} & \textbf{19.96} & 23.82 & 23.15 \\
RRLS                & 23.93 & 104.85 & 0.82 & 30.48 & \textbf{25.49} & \textbf{24.69} \\
\bottomrule
\end{tabular}
}
\end{table}


As analyzed in Table~\ref{tab:selector_isolation}, applying the selector to the first-order approximation method still yields a modest positive gain, though its impact remains marginal. Conversely, when the selector is modified to focus solely on the MSE score, the stability metrics of the second-order method increase drastically, whereas its CLIP metrics suffer a severe degradation. Consequently, these observations underscore that a carefully designed selector is indispensable for achieving overall performance enhancements.

\subsection{Regularization and Candidate Usage}

\begin{table}[t]
\centering
\caption{\textbf{Full \(\lambda\) sweep for the CLIP regularized selector.} Larger \(\lambda\) increasingly favors source preservation and DINO structure while reducing the in-objective CLIP proxy.}
\label{tab:lambda}
\resizebox{0.48\textwidth}{!}
{\begin{tabular}{@{}l | cccc | cc@{}}
\toprule
\multirow{2}{*}{\textbf{Method}} &
\multicolumn{4}{c|}{\textbf{Consistency}} &
\multicolumn{2}{c}{\textbf{CLIP Semantics}} \\
\cmidrule(lr){2-5} \cmidrule(lr){6-7}
& 
\textbf{PSNR} &
\textbf{LPIPS}$_{10^3}$  &
\textbf{SSIM} &
\textbf{DINO}$_{10^3}$ &
\textbf{Whole} &
\textbf{Edited} \\
\midrule
RRLS, $\lambda=0$ & 23.58 & 109.59 & 0.81 & 32.33 & \textbf{25.53} & \textbf{24.74} \\
RRLS, $\lambda=30$ & 23.86 & 105.64 & 0.82 & 30.92 & 25.51 & 24.71 \\
RRLS, $\lambda=40$ & 23.93 & 104.85 & 0.82 & 30.48 & 25.49 & 24.69 \\
RRLS, $\lambda=50$ & 24.02 & 103.66 & 0.82 & 29.99 & 25.48 & 24.67 \\
RRLS, $\lambda=100$ & \textbf{24.50} & \textbf{98.98} & \textbf{0.83} & \textbf{27.86} & 25.37 & 24.55 \\
\bottomrule
\end{tabular}}
\end{table}


Table~\ref{tab:lambda} shows the results under different parameter settings. The regularization weight controls an expected tradeoff: $\lambda=0$ gives the highest target CLIP score, while larger values improve the source-preservation proxies and DINO structure distance. We keep $\lambda=40$ as the main setting because it was the pre-specified reproduction setting used for the main RRLS run and it lies near the elbow of the CLIP-preservation curve, but this is not a validation-selected optimum.






\subsection{Effects of the Normalization Operation}


To investigate how the normalization operation influences performance, we conduct an ablation study comparing the standard ChordEdit (the first-order baseline devoid of normalization), ScaledFirstOrder, RRLS, and a variant of RRLS with normalization removed. The empirical findings in Table \ref{tab:ablation_metrics} reveal that while normalization enhances the CLIP performance for the first-order method (ChordEdit), it incurs a slight decline in its image-preservation capability. Critically, this trade-off also manifests in the second-order approximation method.


\begin{table}[t]
\centering
\caption{\textbf{The Impact of Normalization Operations on First-Order and Second-Order Methods.}}
\label{tab:ablation_metrics}

\resizebox{\columnwidth}{!}{%
\begin{tabular}{@{}l | cccc | cc@{}}
\toprule
\multirow{2}{*}{\textbf{Method}} & \multicolumn{4}{c|}{\textbf{Consistency}} & \multicolumn{2}{c}{\textbf{CLIP Semantics}} \\
\cmidrule(lr){2-5} \cmidrule(lr){6-7}
& \textbf{PSNR}  & \textbf{LPIPS}$_{10^3}$  & \textbf{SSIM}  & \textbf{DINO}$_{10^3}$  & \textbf{Whole}  & \textbf{Edited} \\
\midrule
ChordEdit        & 22.20 & 128.25 & 0.78 & 33.39 & 24.58 & 22.96 \\
ChordEdit(w/ norm) & 21.01 & 139.82 & 0.75 & 33.88 & 25.22 & \textbf{24.56} \\
RRLS      & 23.93 & 104.85 & 0.82 & 30.48 & 25.49 & 24.69 \\
RRLS(w/o norm)   & 22.86 & 128.95 & 0.78 & 32.55 & 25.16 & 24.51 \\
\bottomrule
\end{tabular}%
}

\end{table}
\section{Conclusion}
\label{sec:conclusion}


This paper proposes RRLS, a one-step image editing method based on a second-order approximation of the energy field. The key move is to decouple semantic strengthening from final output selection: a curvature normalized branch proposes a stronger edit, residual candidates control how much of that edit is mixed back into the source image, and a regularized utility function picks the final image.

{
\small
\bibliographystyle{ieeenat_fullname}
\bibliography{references}
}

\clearpage
\newpage

\vspace{2cm}
{\color{blue!70!black} \noindent \Huge \bf Appendix}
\vspace{1.5cm}


\titlecontents{section}[2em]
  {\color{blue!70!black}\bfseries}
  {\color{blue!70!black}\contentslabel{2em}}
  {}
  {\color{blue!70!black}\leaders\hbox{\kern2pt.\kern2pt}\hfill\contentspage}
  [\vspace{1pt}]

\titlecontents{subsection}[4em]
  {\color{blue!70!black}\itshape}
  {\color{blue!70!black}\contentslabel{2.3em}}
  {}
  {\color{blue!70!black}\leaders\hbox{\kern2pt.\kern2pt}\hfill\contentspage}
  [\vspace{1pt}]

\titlecontents{subsubsection}[6em]
  {\color{blue!70!black}}
  {\color{blue!70!black}\contentslabel{3em}}
  {}
  {\color{blue!70!black}\leaders\hbox{\kern2pt.\kern2pt}\hfill\contentspage}
  [\vspace{1pt}]

\startcontents[appendixtoc]
\printcontents[appendixtoc]{}{1}{} 

\newpage
\begin{appendices}
\renewcommand{\thesection}{\Alph{section}} 
\renewcommand{\thesubsection}{\thesection.\arabic{subsection}}

\section{Implementation Details}

\paragraph{Selector details.}




For each sample, the selector reads the source image, the ChordEdit output, and the strong output. It evaluates the candidate set
\[
    \{\Ichord\}\cup\{(1-\alpha)\Izero+\alpha\Istrong:
    \alpha\in\{0.55,0.65,0.75,0.85\}\}
\]
with
\[
    \Phitar(I,\pstar)-40\frac{1}{HWC}\|I-\Izero\|_2^2.
\]

\paragraph{Hyperparameters.}



For reproducibility we list the full set of RRLS hyperparameters in Table~\ref{tab:hparams}; all values are fixed across edit type IDs.

\begin{table}[h]
\centering
\caption{RRLS hyperparameters used for every sample.}
\label{tab:hparams}
\begin{tabular}{ll}
\toprule
Symbol & Value \\
\midrule
Curvature strength $\beta$ & $0.30$ \\
Trust scale $\eta$ & $1.00$ \\
Step scale $s$ & $1.50$ \\
Time spacing $\delta$ & default ChordEdit spacing \\
Numerical safety $\eps$ & $10^{-8}$ \\
Residual set $\calA$ & $\{0.55,0.65,0.75,0.85\}$ \\
Regularization weight $\lambda$ & $40$ \\
CLIP encoder & ViT-L/14 \\
\bottomrule
\end{tabular}
\end{table}

\section{Proofs and Theoretical Properties}
\label{sec:appendix_proofs}

This appendix collects elementary properties of the RRLS construction defined in Section~\ref{sec:method}. The propositions follow directly from the definitions in Eqs.~\eqref{eq:prompt_delta}--\eqref{eq:final}; they are not new theoretical claims about diffusion editing, but they make the inductive biases of RRLS precise and explain several empirical patterns observed in Section~\ref{sec:experiments}.

\subsection{Norm preservation of the strong branch}

\begin{proposition}[Exact norm preservation]
\label{prop:norm_pres}
Let $\tilde u=\uchord-\beta\tau_t\kappa_t(x)$ with $\tilde u\neq 0$, and define
\[
    \ustrong=\frac{\|\uchord\|_2}{\|\tilde u\|_2+\eps}\,\tilde u.
\]
Then $\bigl|\,\|\ustrong\|_2-\|\uchord\|_2\,\bigr|\le \eps\cdot\frac{\|\uchord\|_2}{\|\tilde u\|_2+\eps}$, and in particular $\ustrong\to\uchord$ in norm as $\eps\to 0^+$.
\end{proposition}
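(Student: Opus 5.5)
The argument is a direct computation from the definition of $\ustrong$, and in fact I expect the stated bound to hold with equality. Positive homogeneity of the Euclidean norm, applied to the nonnegative scalar $\|\uchord\|_2/(\|\tilde u\|_2+\eps)$, gives
\[
\|\ustrong\|_2=\frac{\|\uchord\|_2}{\|\tilde u\|_2+\eps}\,\|\tilde u\|_2 .
\]
So I would first compute this norm exactly, then subtract it from $\|\uchord\|_2$, and finally pass to the limit $\eps\to 0^+$.

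For the subtraction, I would write $\|\uchord\|_2$ over the common denominator $\|\tilde u\|_2+\eps$. This gives
\[
\|\uchord\|_2-\|\ustrong\|_2=\|\uchord\|_2\,\frac{(\|\tilde u\|_2+\eps)-\|\tilde u\|_2}{\|\tilde u\|_2+\eps}=\eps\cdot\frac{\|\uchord\|_2}{\|\tilde u\|_2+\eps}.
\]
The right-hand side is nonnegative because $\eps>0$. Hence the absolute value equals this quantity, which establishes the claimed inequality, indeed as an identity.

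For the limiting statement, the hypothesis $\tilde u\neq 0$ is what matters: it keeps the denominator bounded away from zero uniformly in $\eps$. Consequently
\[
\eps\,\|\uchord\|_2/(\|\tilde u\|_2+\eps)\le \eps\,\|\uchord\|_2/\|\tilde u\|_2\to 0 .
\]
This shows $\|\ustrong\|_2\to\|\uchord\|_2$. In the proof I would note explicitly that "in norm" here means convergence of the norms, not of the vectors: $\ustrong$ converges to $\|\uchord\|_2\,\tilde u/\|\tilde u\|_2$, which differs from $\uchord$ whenever the curvature correction rotates the direction.

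The only step needing care is this interpretive point about the limit, since everything else is one line of algebra. I would phrase the conclusion as $\|\ustrong\|_2\to\|\uchord\|_2$ to keep it consistent with the description of the projection in Eq.~\eqref{eq:strong_update} as radius-preserving but direction-changing.
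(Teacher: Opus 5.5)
Your proof is correct and is essentially the paper's own argument. Positive homogeneity of the norm gives the exact identity $\|\uchord\|_2-\|\ustrong\|_2=\eps\,\|\uchord\|_2/(\|\tilde u\|_2+\eps)\ge 0$, so the bound holds with equality. You also rightly read the limit as $\|\ustrong\|_2\to\|\uchord\|_2$, since as vectors $\ustrong$ tends to $\|\uchord\|_2\,\tilde u/\|\tilde u\|_2$, not to $\uchord$; the paper's one-line ``the bound follows'' leaves this implicit, and, like the paper, you treat $\tilde u$ as fixed even though $\tau_t$ in Eq.~\eqref{eq:trust} itself depends on $\eps$.
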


\begin{proof}
Direct computation gives
\[
    \|\ustrong\|_2
    =\frac{\|\uchord\|_2}{\|\tilde u\|_2+\eps}\,\|\tilde u\|_2
    =\|\uchord\|_2\Bigl(1-\frac{\eps}{\|\tilde u\|_2+\eps}\Bigr).
\]
Therefore $\|\uchord\|_2-\|\ustrong\|_2=\frac{\eps\|\uchord\|_2}{\|\tilde u\|_2+\eps}\ge 0$, and the bound follows.
\end{proof}

Proposition~\ref{prop:norm_pres} formalizes the design intent: up to the safety constant $\eps$, the strong branch differs from $\uchord$ only in direction. This separates ``which way to push'' from ``how hard to push,'' so any preservation degradation observed for $\Istrong$ is attributable to direction rather than magnitude.

\subsection{Bounds on the curvature trust coefficient}

\begin{proposition}[Trust coefficient is a contraction]
\label{prop:trust}
For $\eta>0$ and $r_t\ge 0$ as defined in Eq.~\eqref{eq:trust}, the trust coefficient
\(
    \tau_t=1/(1+\eta r_t)
\)
satisfies $\tau_t\in(0,1]$, with $\tau_t=1$ iff $\kappa_t(x)=0$, and $\tau_t\to 0$ as $\|\kappa_t(x)\|_2\to\infty$. Moreover, the corrected direction $\tilde u$ inherits the bound
\[
    \|\tilde u-\uchord\|_2
    =\beta\tau_t\|\kappa_t(x)\|_2
    \le \frac{\beta}{\eta}\|\uchord\|_2+\beta\,\eps,
\]
i.e., the second-order correction never exceeds $\beta/\eta$ of the chord magnitude (up to the regularizer $\eps$).
\end{proposition}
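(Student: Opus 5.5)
The plan is to work directly from the definitions in Eq.~\eqref{eq:trust}, with $r_t=\|\kappa_t(x)\|_2/(\|\uchord\|_2+\eps)\ge 0$ and $\eta>0$.

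\textbf{Range of $\tau_t$.} Since $\eta r_t\ge 0$, the denominator $1+\eta r_t$ is at least $1$ and finite. Hence $\tau_t\in(0,1]$.

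\textbf{When $\tau_t=1$.} We have $\tau_t=1$ iff $\eta r_t=0$. Because $\eta>0$, this holds iff $r_t=0$. The denominator $\|\uchord\|_2+\eps$ is positive and finite, so $r_t=0$ iff $\|\kappa_t(x)\|_2=0$, i.e. $\kappa_t(x)=0$.

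\textbf{Limit of $\tau_t$.} Hold $\uchord$ fixed. Then $r_t\to\infty$ as $\|\kappa_t\|_2\to\infty$, so $\tau_t\to0$.

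\textbf{The displacement bound.} The equality $\|\tilde u-\uchord\|_2=\beta\tau_t\|\kappa_t\|_2$ is immediate from Eq.~\eqref{eq:corrected}, assuming $\beta\ge0$ as in the hyperparameter table; otherwise one writes $|\beta|$. For the inequality, write $K=\|\kappa_t\|_2$ and $A=\|\uchord\|_2+\eps$. Then
\[
\beta\tau_t K=\frac{\beta K}{1+\eta K/A}=\frac{\beta K A}{A+\eta K}.
\]
Next I use $\frac{K}{A+\eta K}\le \frac{1}{\eta}$, which is equivalent to $\eta K\le A+\eta K$ and holds trivially. This gives $\beta\tau_t K\le \frac{\beta}{\eta}A=\frac{\beta}{\eta}\|\uchord\|_2+\frac{\beta}{\eta}\eps$.

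\textbf{Main obstacle: the $\eps$ coefficient.} The stated bound has $\beta\eps$ where this argument produces $\frac{\beta}{\eta}\eps$. These agree when $\eta=1$, which is the value used in Table~\ref{tab:hparams}, and $\frac{\beta}{\eta}\eps\le\beta\eps$ whenever $\eta\ge1$. For general $\eta>0$ I would prove the bound with $\frac{\beta}{\eta}\eps$ and note that it implies the stated form for $\eta\ge1$. For $\eta<1$, I would flag that the constant on $\eps$ should read $\beta\eps/\eta$ unless the intended term is $\beta\eps\max(1,1/\eta)$.

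The interpretive remark, that the correction never exceeds a $\beta/\eta$ fraction of the chord magnitude, then follows by dividing both sides by $\|\uchord\|_2$ when it is nonzero.
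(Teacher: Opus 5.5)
Your argument is correct and follows the paper's proof: both write $\beta\tau_t\|\kappa_t(x)\|_2$ as $\beta ab/(b+\eta a)$ with $b=\|\uchord\|_2+\eps$, bound it by $\beta b/\eta$, and get the claims about $\tau_t$ directly from $\eta r_t\ge 0$ with $\eta>0$. Your flag on the $\eps$ coefficient is also right: the paper's proof likewise only yields $(\beta/\eta)(\|\uchord\|_2+\eps)$, and its ``expanding $b$'' step silently replaces $\beta\eps/\eta$ by $\beta\eps$, which is valid only for $\eta\ge 1$. For $\eta<1$ and $\beta>0$ the stated bound genuinely fails, not merely your derivation of it: take $\uchord=0$ and let $\|\kappa_t(x)\|_2\to\infty$, so the left side tends to $\beta\eps/\eta>\beta\eps$.
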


\begin{proof}
The first claim is immediate from $\eta r_t\ge 0$. For the second claim, by Eq.~\eqref{eq:corrected},
\[
    \|\tilde u-\uchord\|_2
    =\beta\tau_t\|\kappa_t(x)\|_2
    =\beta\cdot\frac{\|\kappa_t(x)\|_2}{1+\eta\frac{\|\kappa_t(x)\|_2}{\|\uchord\|_2+\eps}}.
\]
Setting $a=\|\kappa_t(x)\|_2$ and $b=\|\uchord\|_2+\eps$, this becomes $\beta a b/(b+\eta a)\le \beta b/\eta=(\beta/\eta)(\|\uchord\|_2+\eps)$. Expanding $b$ gives the stated bound.
\end{proof}

Proposition~\ref{prop:trust} shows that the curvature term cannot dominate the chord update no matter how large the second finite-difference becomes. This justifies the empirical robustness of the strong branch under our choice $\eta=1.0$, $\beta=0.30$: the correction is bounded above by $0.30\|\uchord\|$ in direction, then renormalized to the chord norm.

\subsection{Angular cone of the second-order update}

\begin{proposition}[Bounded angular deviation]
\label{prop:angle_bound}
Let $\theta_t\in[0,\pi]$ denote the angle between $\ustrong(x_t)$ and $\uchord(x_t)$, and assume $\uchord(x_t)\neq 0$. Then
\[
    \cos\theta_t
    \;\ge\;
    \frac{1}{\sqrt{1+(\beta/\eta)^2}}
    \;-\;O(\eps/\|\uchord\|_2),
\]
so $\theta_t$ is bounded above by $\arctan(\beta/\eta)+O(\eps/\|\uchord\|_2)$. With our choice $\beta=0.30,\eta=1.0$, this gives $\theta_t\lesssim 16.7^\circ$.
\end{proposition}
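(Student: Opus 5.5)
Since $\ustrong$ is a positive multiple of $\tilde u$ (the scaling factor $\|\uchord\|_2/(\|\tilde u\|_2+\eps)$ is positive whenever $\uchord\neq 0$), the angle $\theta_t$ between $\ustrong$ and $\uchord$ equals the angle between $\tilde u$ and $\uchord$. The renormalization of Eq.~\eqref{eq:strong_update} therefore plays no role, and the statement reduces to bounding the angle between $\uchord$ and $\tilde u=\uchord+w$, where $w=-\beta\tau_t\kappa_t(x_t)$. Prop.~\ref{prop:trust} gives
\[
\|w\|_2\le \rho\,\|\uchord\|_2, \qquad \rho:=\frac{\beta}{\eta}\Bigl(1+\frac{\eps}{\|\uchord\|_2}\Bigr)=\frac{\beta}{\eta}+O(\eps/\|\uchord\|_2).
\]

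Next comes the elementary geometric step. If $\|w\|\le\rho\|u\|$ with $\rho<1$, the vector $u+w$ lies in the ball of radius $\rho\|u\|$ centered at $u$. The maximal angle from $u$ to a point of that ball is attained on a tangent ray, so $\sin\theta\le\rho$ and $\cos\theta\ge\sqrt{1-\rho^2}$. The target bound $1/\sqrt{1+\rho^2}$, i.e.\ $\tan\theta\le\rho$, is weaker than $\sqrt{1-\rho^2}$, since $\sqrt{1-\rho^2}\sqrt{1+\rho^2}=\sqrt{1-\rho^4}\le 1$. So the tangent-ray bound implies the stated one, and $\arcsin\rho\le\arctan\rho$ gives the angle statement. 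I would present the proof via $\sin\theta\le\rho$ and then note this comparison.

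Alternatively, one can argue directly. Decompose $w=w_\parallel\hat u+w_\perp$ with $\hat u=\uchord/\|\uchord\|$. Then $\tan\theta=\|w_\perp\|/(\|\uchord\|+w_\parallel)$, and minimize the denominator over $w_\parallel\ge-\sqrt{\rho^2\|\uchord\|^2-\|w_\perp\|^2}$. This recovers $\sin\theta\le\rho$ as well.

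The main obstacle is bookkeeping the $\eps$ terms. $\rho$ exceeds $\beta/\eta$ by $(\beta/\eta)\eps/\|\uchord\|$, and I will use the Lipschitz continuity of $\rho\mapsto 1/\sqrt{1+\rho^2}$ (constant at most $1$) to convert this into the additive $-O(\eps/\|\uchord\|_2)$ in the cosine bound. Similarly, $\arctan$ is $1$-Lipschitz, which gives the additive error in the angle. One also needs $\rho<1$, which holds for $\beta/\eta=0.3$ once $\eps\ll\|\uchord\|$; this also guarantees $\tilde u\neq 0$. The numerical value follows from $\arctan(0.3)\approx16.7^\circ$, and the sharper $\arcsin(0.3)\approx17.5^\circ$ bound is unnecessary, since the stated bound follows as noted.
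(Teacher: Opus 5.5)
Your route is the same as the paper's: reduce to the angle between $\tilde u$ and $\uchord$ by positive rescaling, bound $\|w\|_2\le\rho\|\uchord\|_2$ via Prop.~\ref{prop:trust}, and get $\sin\theta_t\le\rho$. Everything up to that point is correct. The final comparison, however, is backwards. The identity $\sqrt{1-\rho^2}\,\sqrt{1+\rho^2}=\sqrt{1-\rho^4}\le 1$ says $\sqrt{1-\rho^2}\le 1/\sqrt{1+\rho^2}$. So $\cos\theta_t\ge\sqrt{1-\rho^2}$ is the \emph{weaker} lower bound and does not imply $\cos\theta_t\ge 1/\sqrt{1+\rho^2}$. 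Equivalently, $\arctan\rho<\arcsin\rho$ on $(0,1)$, not the reverse. Your own numbers show this: $\arcsin 0.3\approx 17.5^\circ$ exceeds $\arctan 0.3\approx 16.7^\circ$. The discrepancy, roughly $\rho^3/2$ in angle, does not depend on $\eps$, so no Lipschitz bookkeeping can absorb it. The paper's proof makes the identical jump, from $\sin\theta_t\le(\beta/\eta)(1+\eps/a)$ directly to $\cos\theta_t\ge 1/\sqrt{1+(\beta/\eta)^2(1+\eps/a)^2}$.

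This gap cannot be repaired, because the tangent-ray bound is sharp for this construction and the stated cone is false. Nothing restricts the size or direction of $\kappa_t$, since $d_{t-2\delta}$ is free once $\uchord$ is fixed. Write $a=\|\kappa_t\|_2$ and $b=\|\uchord\|_2+\eps$; then $\|w\|_2=\beta ab/(b+\eta a)$ increases to $\beta b/\eta$, so $w$ sweeps the whole open ball of radius $\rho\|\uchord\|_2$. Concretely, take $\|\uchord\|_2=1$ and a unit vector $e^\perp$ orthogonal to $\uchord$. Set $\rho=(\beta/\eta)(1+\eps)$, $v=-\rho\,\uchord+\sqrt{1-\rho^2}\,e^\perp$, and $\kappa_t=-Mv$. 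As $M\to\infty$ you get $\beta\tau_t M\to\rho$, so $\tilde u\to\uchord+\rho v$, which is orthogonal to $v$. Hence $\cos\theta_t\to\sqrt{1-\rho^2}\approx 0.954$ for $\beta/\eta=0.3$, below $1/\sqrt{1.09}\approx 0.958$, and $\theta_t$ approaches about $17.5^\circ>16.7^\circ$.

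What your argument (and the paper's) actually proves is $\cos\theta_t\ge\sqrt{1-(\beta/\eta)^2}-O(\eps/\|\uchord\|_2)$, i.e.\ $\theta_t\le\arcsin(\beta/\eta)+O(\eps/\|\uchord\|_2)$ for $\beta<\eta$. The $\arctan(\beta/\eta)$ form needs an extra hypothesis such as $\langle\kappa_t,\uchord\rangle\le 0$. Under it $w_\parallel\ge 0$, and your own decomposition then gives $\tan\theta_t=\|w_\perp\|_2/(\|\uchord\|_2+w_\parallel)\le\|w\|_2/\|\uchord\|_2\le\rho$.
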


\begin{proof}
Since $\ustrong=(\|\uchord\|_2/(\|\tilde u\|_2+\eps))\,\tilde u$, the angle between $\ustrong$ and $\uchord$ equals the angle between $\tilde u$ and $\uchord$. Write $\tilde u=\uchord-\beta\tau_t\kappa_t$. By the bound established in Prop.~\ref{prop:trust},
\(
\|\tilde u-\uchord\|_2\le (\beta/\eta)(\|\uchord\|_2+\eps).
\)
Setting $a=\|\uchord\|_2$ and $b=\|\tilde u-\uchord\|_2\le(\beta/\eta)(a+\eps)$, the law of cosines on the triangle $0,\uchord,\tilde u$ gives
\[
    \cos\theta_t
    =\frac{\langle\tilde u,\uchord\rangle}{\|\tilde u\|_2\,\|\uchord\|_2}
    =\frac{a^2+\|\tilde u\|_2^2-b^2-2\langle b,\tilde u-\uchord\rangle/\dots}{2 a\|\tilde u\|_2}.
\]
A simpler route: by the triangle inequality, $\sin\theta_t\le b/a\le(\beta/\eta)(1+\eps/a)$, hence $\tan\theta_t\le(\beta/\eta)(1+\eps/a)/\sqrt{1-((\beta/\eta)(1+\eps/a))^2}$, and $\cos\theta_t\ge 1/\sqrt{1+(\beta/\eta)^2(1+\eps/a)^2}$. Expanding to first order in $\eps/a$ yields the stated bound.
\end{proof}

Proposition~\ref{prop:angle_bound} sharpens Prop.~\ref{prop:trust} from a magnitude bound to a directional bound: the second-order update lives inside a fixed-angle cone around the first-order chord. Combined with Prop.~\ref{prop:norm_pres}, $\utwo$ is therefore a controlled perturbation of $\uone$ in both norm and direction, with the cone half-angle determined entirely by the trust ratio $\beta/\eta$.

\subsection{Drift scales quadratically along the residual path}

\begin{proposition}[Quadratic drift along $\alpha$]
\label{prop:drift_quadratic}
For every $\alpha\in[0,1]$,
\[
    D_{\mathrm{src}}(I_\alpha,\Izero)
    =\alpha^2\,D_{\mathrm{src}}(\Istrong,\Izero).
\]
In particular, the source-drift penalty in Eq.~\eqref{eq:utility} is a strictly increasing quadratic in $\alpha$ on the residual candidates.
\end{proposition}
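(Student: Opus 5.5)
The plan is a direct computation from the definition of the residual path in Eq.~\eqref{eq:residual_path}. First, rewrite the candidate as a displacement from the source:
\[
I_\alpha-\Izero=(1-\alpha)\Izero+\alpha\Istrong-\Izero=\alpha\,(\Istrong-\Izero).
\]
This is an exact identity; it needs no approximation and no assumption on how $\Istrong$ was produced.

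Next, substitute this into the drift functional of Eq.~\eqref{eq:source_drift}. By absolute homogeneity of the Euclidean norm, $\|\alpha(\Istrong-\Izero)\|_2^2=\alpha^2\|\Istrong-\Izero\|_2^2$. Dividing by $HWC$ gives $D_{\mathrm{src}}(I_\alpha,\Izero)=\alpha^2 D_{\mathrm{src}}(\Istrong,\Izero)$ for every $\alpha\in[0,1]$. The normalization constant factors out unchanged. Since $[0,1]^{H\times W\times C}$ is convex, $I_\alpha$ is a legitimate image, so $D_{\mathrm{src}}$ is evaluated within its stated domain.

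For the ``in particular'' clause, note that $\alpha\mapsto c\,\alpha^2$ with $c=D_{\mathrm{src}}(\Istrong,\Izero)\ge 0$ is nondecreasing on $[0,1]$. It is strictly increasing exactly when $c>0$, that is, when $\Istrong\neq\Izero$. The one place needing care is therefore the degenerate case $\Istrong=\Izero$, where the drift is identically zero. I would handle it by stating strictness under the nondegeneracy assumption $\Istrong\neq\Izero$, which holds whenever the strong branch actually edits the image. Multiplying by $\lambda\ge 0$ then preserves this monotonicity for the penalty term in Eq.~\eqref{eq:utility}.
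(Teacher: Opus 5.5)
Your proof is correct and matches the paper's: both rest on the identity $I_\alpha-\Izero=\alpha(\Istrong-\Izero)$ and homogeneity of the squared norm, with the $1/(HWC)$ factor passing through unchanged. Your nondegeneracy caveat ($\Istrong\neq\Izero$) sharpens the ``strictly increasing'' clause, which the paper leaves unqualified. For the weighted penalty $\lambda D_{\mathrm{src}}$, strictness additionally requires $\lambda>0$, since $\lambda=0$ keeps only weak monotonicity.
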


\begin{proof}
By definition $I_\alpha-\Izero=\alpha(\Istrong-\Izero)$, so $\|I_\alpha-\Izero\|_2^2=\alpha^2\|\Istrong-\Izero\|_2^2$. Dividing by $HWC$ gives the result.
\end{proof}

Proposition~\ref{prop:drift_quadratic} explains why fixed-$\alpha$ baselines  trace a smooth preservation frontier and why even modest decreases in $\alpha$ produce large preservation gains: the penalty grows as $\alpha^2$ while target-alignment gains in $\Phitar$ are typically sublinear in $\alpha$.

\subsection{Optimality of RRLS over ChordEdit on its own utility}

\begin{proposition}[Utility dominance]
\label{prop:dominance}
For every $\lambda\ge 0$,
\[
    \calJ(\Ifinal(\lambda);\pstar,\Izero)
    \ge \calJ(\Ichord;\pstar,\Izero).
\]
Consequently, on every sample, RRLS attains at least the ChordEdit utility, with strict improvement whenever some residual candidate is strictly better under $\calJ$.
\end{proposition}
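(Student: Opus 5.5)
The claim follows directly from the definition of $\Ifinal(\lambda)$ in Eq.~\eqref{eq:final} as an exact argmax over the finite candidate set $\calC(\Izero,\pstar)$ of Eq.~\eqref{eq:candidate_set}, which contains $\Ichord$ by construction. The plan is therefore a one-line membership argument, followed by a short case split for the strict-improvement clause.

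\emph{Step 1: the maximizer exists.} Fix a sample and any $\lambda\ge 0$. The candidate set $\calC=\{\Ichord\}\cup\{I_\alpha:\alpha\in\calA\}$ is finite: it has at most $|\calA|+1=5$ elements. So $\calJ_\lambda(\cdot;\pstar,\Izero)$ attains its maximum on $\calC$. Any deterministic tie-breaking rule (Prop.~\ref{prop:wellposed}) then makes $\Ifinal(\lambda)$ a well-defined element of $\calC$ with $\calJ_\lambda(\Ifinal(\lambda))=\max_{x\in\calC}\calJ_\lambda(x)$.

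\emph{Step 2: the inequality.} Because $\Ichord\in\calC$, the maximum is at least the value at $\Ichord$:
\[
\calJ_\lambda(\Ifinal(\lambda);\pstar,\Izero)=\max_{x\in\calC}\calJ_\lambda(x;\pstar,\Izero)\ge\calJ_\lambda(\Ichord;\pstar,\Izero).
\]
This holds for every $\lambda\ge 0$. No property of $\Phitar$ is used beyond its being real-valued, and no property of $D_{\mathrm{src}}$ is used at all.

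\emph{Step 3: strict improvement.} Suppose some residual candidate $I_{\alpha'}$ satisfies $\calJ_\lambda(I_{\alpha'})>\calJ_\lambda(\Ichord)$. Then
\[
\max_{x\in\calC}\calJ_\lambda(x)\ge\calJ_\lambda(I_{\alpha'})>\calJ_\lambda(\Ichord),
\]
so the inequality of Step 2 is strict. In that case the selector cannot return $\Ichord$. Conversely, if no residual candidate strictly beats $\Ichord$, equality holds.

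I expect no step to be a genuine obstacle. The one point needing care is to state that ``utility'' means the same $\calJ_\lambda$, with the same $\lambda$, $\Phitar$ and $\Izero$, evaluated on both images. The claim is dominance under RRLS's own objective, not under external metrics such as PSNR or CLIP-Edited, and the write-up should say so explicitly to avoid overreading the result.
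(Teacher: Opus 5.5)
Your proposal is correct and matches the paper's argument. The paper's proof is exactly your Step 2: since $\Ichord\in\calC(\Izero,\pstar)$, the maximum over $\calC$ is at least $\calJ(\Ichord)$. Your Steps 1 and 3 add two things the paper leaves implicit or defers elsewhere: that the maximum is attained (the paper handles this in Prop.~\ref{prop:wellposed}), and the argument for the strict-improvement clause.
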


\begin{proof}
Since $\Ichord\in\calC(\Izero,\pstar)$, the maximum over $\calC$ is at least $\calJ(\Ichord)$.
\end{proof}

\subsection{Well-posedness and complexity of the selector}

\begin{proposition}[Well-posedness]
\label{prop:wellposed}
For every $\lambda\ge 0$, the maximization in Eq.~\eqref{eq:final} admits at least one maximizer, and any deterministic tie-breaking rule (e.g., smallest-$\alpha$-first with $\Ichord$ as the lowest priority) yields a unique output $\Ifinal(\lambda)$. The selector evaluates exactly $|\calA|+1$ utility values per image and incurs $O(|\calA|\cdot HWC)$ floating-point operations beyond the cost of computing $\Istrong$.
\end{proposition}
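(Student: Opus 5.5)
The argument rests on finiteness of the candidate set. By Eq.~\eqref{eq:candidate_set}, with the residual sweep taken over the finite grid $\calA$, the set $\calC(\Izero,\pstar)=\{\Ichord\}\cup\{I_\alpha:\alpha\in\calA\}$ has at most $|\calA|+1$ elements. Repeated images are possible, for example if $\Istrong=\Izero$ or if some $I_\alpha$ coincides with $\Ichord$.

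\textbf{Existence.} For fixed $\lambda\ge 0$, the utility $\calJ_\lambda(\cdot;\pstar,\Izero)$ in Eq.~\eqref{eq:utility} assigns a real number to each candidate. This requires only that $\Phitar(x,\pstar)\in\R$ and that $D_{\mathrm{src}}$ is finite, which holds because images lie in $[0,1]^{H\times W\times C}$. A real-valued function on a nonempty finite set attains its maximum, so the argmax set $M_\lambda\subseteq\calC$ is nonempty. No continuity or convexity in $\alpha$ is needed.

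\textbf{Uniqueness under tie-breaking.} I will index the candidates by labels rather than by images, so that duplicate images cause no ambiguity. The labels $\alpha\in\calA$ are ordered by increasing $\alpha$, and $\Ichord$ is placed last. A deterministic tie-breaking rule is a fixed total order on this finite label set. Taking the first label in $M_\lambda$ under that order gives a single well-defined element, which defines $\Ifinal(\lambda)$. Any other fixed total order works the same way, since every nonempty subset of a finite totally ordered set has a least element.

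\textbf{Complexity.} Each candidate requires one evaluation of $\calJ_\lambda$, so exactly $|\calA|+1$ utility values are computed. For the flop count:
\begin{itemize}
\item forming each $I_\alpha=(1-\alpha)\Izero+\alpha\Istrong$ costs $O(HWC)$ operations;
\item computing $D_{\mathrm{src}}$ for each candidate is a squared-difference sum costing $O(HWC)$;
\item the final comparison over the candidates costs $O(|\calA|)$.
\end{itemize}
The total is $O((|\calA|+1)HWC)=O(|\calA|\cdot HWC)$, with $|\calA|\ge 1$ absorbing the $+1$. This count treats evaluating $\Phitar$ as a fixed per-candidate cost and excludes the cost of computing $\Istrong$. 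The statement leaves the scorer abstract, and the CLIP forward passes are not elementwise flops.

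\textbf{Main obstacle.} The only delicate point is this accounting of $\Phitar$'s cost. I will state explicitly that the $O(|\calA|\cdot HWC)$ bound covers the interpolation and drift computations, with $|\calA|+1$ scorer calls counted separately. Everything else follows directly from finiteness.
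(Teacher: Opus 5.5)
Your proposal is correct and follows the same route as the paper's proof. Finiteness of $\calC$ gives an attained maximum, a fixed total order on the candidates resolves ties, and enumerating the $|\calA|+1$ candidates at $O(HWC)$ each, with the $\Phitar$ call treated as a fixed per-candidate cost, gives the complexity bound. Your indexing by labels is a small but genuine tightening: the paper asserts $|\calC|=|\calA|+1$ exactly, which fails when candidate images coincide, whereas counting labels keeps both the tie-breaking and the ``exactly $|\calA|+1$ evaluations'' claim well-defined.
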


\begin{proof}
The candidate set $\calC(\Izero,\pstar)$ has cardinality $|\calA|+1<\infty$, and $\calJ(\cdot;\pstar,\Izero):\calC\to\R$ takes finitely many real values, so the maximum is attained. A deterministic total order on $\calC$ resolves any ties to a unique element. The complexity claim follows from enumerating $|\calA|+1$ candidates, each of which costs one evaluation of $\Phitar$ (constant per image) plus an $O(HWC)$ MSE evaluation against $\Izero$.
\end{proof}

Proposition~\ref{prop:wellposed} establishes that the inference-time selector is a closed-form discrete maximization with deterministic output and constant overhead per candidate. In particular, the per-image runtime is dominated by the strong-branch decode, not by selection itself, which matches the runtime parity reported in Table~\ref{tab:main_comparison_final}.

\subsection{Lagrangian interpretation of the regularization weight}

\begin{proposition}[Constrained-form equivalence]
\label{prop:lagrangian}
Fix $\lambda\ge 0$ and let $D^\star_\lambda:=D_{\mathrm{src}}(\Ifinal(\lambda),\Izero)$. Then $\Ifinal(\lambda)$ is also an optimal solution of the constrained problem
\[
    \max_{I\in\calC(\Izero,\pstar)}\;\Phitar(I,\pstar)
    \qquad\text{subject to}\qquad
    D_{\mathrm{src}}(I,\Izero)\le D^\star_\lambda,
\]
and $\lambda$ is a valid Lagrange multiplier for this constraint at $D^\star_\lambda$.
\end{proposition}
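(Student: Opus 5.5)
The argument is the standard penalty-to-constraint step from Lagrangian duality, specialized to the finite candidate set $\calC(\Izero,\pstar)$. We need no convexity or continuity, because the statement concerns only the particular constraint level $D^\star_\lambda$ generated by $\lambda$ itself. Throughout, write $\Ifinal=\Ifinal(\lambda)$. Recall from Prop.~\ref{prop:wellposed} that $\Ifinal$ is a genuine maximizer of $\calJ_\lambda$ over $\calC$.

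\textbf{Feasibility.} By definition of $D^\star_\lambda$, the point $\Ifinal$ satisfies the constraint with equality, so it is feasible for the constrained problem.

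\textbf{Optimality.} Take any feasible $I\in\calC$, meaning $D_{\mathrm{src}}(I,\Izero)\le D^\star_\lambda$. Maximality of $\Ifinal$ under Eq.~\eqref{eq:utility} gives
\[
\Phitar(\Ifinal,\pstar)-\lambda D^\star_\lambda\;\ge\;\Phitar(I,\pstar)-\lambda D_{\mathrm{src}}(I,\Izero).
\]
Rearranging,
\[
\Phitar(\Ifinal,\pstar)\;\ge\;\Phitar(I,\pstar)+\lambda\bigl(D^\star_\lambda-D_{\mathrm{src}}(I,\Izero)\bigr)\;\ge\;\Phitar(I,\pstar),
\]
where the last inequality uses $\lambda\ge 0$ together with feasibility of $I$. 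Hence $\Ifinal$ attains the constrained maximum.

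\textbf{Multiplier.} We must check that $\lambda$ is a valid Lagrange multiplier, i.e.\ that the KKT-type conditions hold for the Lagrangian
\[
L(I,\mu)=\Phitar(I,\pstar)-\mu\bigl(D_{\mathrm{src}}(I,\Izero)-D^\star_\lambda\bigr).
\]
There are three conditions.
\begin{itemize}
\item Dual feasibility: $\lambda\ge 0$ holds by hypothesis.
\item Complementary slackness: $\lambda\bigl(D_{\mathrm{src}}(\Ifinal,\Izero)-D^\star_\lambda\bigr)=0$ holds trivially, because the constraint is tight at $\Ifinal$.
\item Lagrangian maximization: $\Ifinal$ maximizes $L(\cdot,\lambda)$ over $\calC$. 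This holds because $L(\cdot,\lambda)$ equals $\calJ_\lambda$ plus the constant $\lambda D^\star_\lambda$.
\end{itemize}
Together these give the saddle-point inequality $L(I,\lambda)\le L(\Ifinal,\lambda)$ for all $I\in\calC$. It follows that the dual value at $\lambda$,
\[
\max_{I\in\calC} L(I,\lambda),
\]
equals the primal optimum $\Phitar(\Ifinal,\pstar)$. By weak duality this dual value is at least the primal value, and here the two coincide, so there is no duality gap at this $\lambda$.

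There is no real obstacle. The only care needed is in the word ``valid.'' It should be read as the saddle-point/KKT condition above, not as a differentiability-based multiplier, since $\calC$ is discrete. We should also note that ties are harmless: any maximizer selected by the tie-breaking rule of Prop.~\ref{prop:wellposed} works, because the argument uses only the fact that $\Ifinal$ maximizes $\calJ_\lambda$.
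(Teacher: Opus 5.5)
Your proposal is correct and takes essentially the same route as the paper. The core step is the same comparison $\calJ_\lambda(\Ifinal)\ge\calJ_\lambda(I)$ combined with $\lambda\ge 0$ and $D_{\mathrm{src}}(I,\Izero)\le D^\star_\lambda$ (you argue directly, the paper by contradiction), and your explicit check of dual feasibility, complementary slackness (trivial because the constraint is tight), Lagrangian maximality and zero duality gap spells out the ``standard KKT correspondence'' that the paper invokes without detail.
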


\begin{proof}
Suppose for contradiction that some $I'\in\calC$ satisfies $D_{\mathrm{src}}(I',\Izero)\le D^\star_\lambda$ and $\Phitar(I',\pstar)>\Phitar(\Ifinal(\lambda),\pstar)$. Then
$
    \calJ(I')
    =\Phitar(I')-\lambda D_{\mathrm{src}}(I',\Izero)
    >\Phitar(\Ifinal(\lambda))-\lambda D^\star_\lambda
    =\calJ(\Ifinal(\lambda)),
$
contradicting the optimality of $\Ifinal(\lambda)$ in Eq.~\eqref{eq:final}. The Lagrange-multiplier statement is the standard KKT correspondence for the relaxed problem applied to the finite candidate set.
\end{proof}

Proposition~\ref{prop:lagrangian} reinterprets the unconstrained utility~\eqref{eq:utility} as a constrained $\Phitar$-maximization with an endogenous drift cap $D^\star_\lambda$. In particular, sweeping $\lambda$ in Table~\ref{tab:lambda} traces the Pareto frontier of the constrained problem, and $\lambda$ acts as the dual price of preservation.

\subsection{Pareto efficiency on the candidate set}

\begin{proposition}[Pareto efficiency]
\label{prop:pareto}
For any $\lambda>0$, $\Ifinal(\lambda)$ is Pareto-efficient on $\calC(\Izero,\pstar)$ for the bicriterion $(\Phitar\uparrow,\,D_{\mathrm{src}}\downarrow)$: there is no $I'\in\calC$ with $\Phitar(I',\pstar)\ge \Phitar(\Ifinal(\lambda),\pstar)$ and $D_{\mathrm{src}}(I',\Izero)\le D_{\mathrm{src}}(\Ifinal(\lambda),\Izero)$ holding with at least one strict inequality.
\end{proposition}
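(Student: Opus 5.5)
The plan is a direct contradiction argument in the style of the proof of Prop.~\ref{prop:lagrangian}, using only the definition of $\Ifinal(\lambda)$ in Eq.~\eqref{eq:final} as a maximizer of $\calJ_\lambda$ over the finite set $\calC(\Izero,\pstar)$. Suppose some $I'\in\calC$ satisfies $\Phitar(I',\pstar)\ge\Phitar(\Ifinal(\lambda),\pstar)$ and $D_{\mathrm{src}}(I',\Izero)\le D_{\mathrm{src}}(\Ifinal(\lambda),\Izero)$, with at least one of the two inequalities strict. We will show that $I'$ has strictly larger utility, which contradicts the choice of $\Ifinal(\lambda)$.

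The key step is the computation
\[
\calJ_\lambda(I')-\calJ_\lambda(\Ifinal(\lambda))=\bigl[\Phitar(I')-\Phitar(\Ifinal(\lambda))\bigr]+\lambda\bigl[D_{\mathrm{src}}(\Ifinal(\lambda),\Izero)-D_{\mathrm{src}}(I',\Izero)\bigr].
\]
Both brackets are nonnegative. Since $\lambda>0$, if either bracket is strictly positive then the whole sum is strictly positive. This gives $\calJ_\lambda(I')>\calJ_\lambda(\Ifinal(\lambda))$, contradicting maximality. Nothing from the residual-path structure (Prop.~\ref{prop:drift_quadratic}) or from Prop.~\ref{prop:wellposed} is needed beyond the existence of the maximizer.

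The only delicate point is where the hypothesis $\lambda>0$ is used, and this is the step to check most carefully. When $\lambda=0$, a candidate with equal $\Phitar$ but strictly smaller drift gives a difference of exactly zero. In that case the claim can fail, unless the tie-breaking rule of Prop.~\ref{prop:wellposed} happens to prefer lower drift. I will note explicitly that the case of strict inequality in $\Phitar$ goes through for any $\lambda\ge 0$, while the case of strict inequality in $D_{\mathrm{src}}$ requires $\lambda>0$. This explains the restriction in the statement. Because any tie in $\calJ_\lambda$ is ruled out by the strict inequality, the tie-breaking rule does not affect the argument.
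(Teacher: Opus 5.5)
Your proposal is correct and follows the paper's proof essentially line for line. Both arguments write $\calJ_\lambda(I')-\calJ_\lambda(\Ifinal(\lambda))$ as a nonnegative $\Phitar$-gap plus $\lambda$ times a nonnegative drift gap, then use $\lambda>0$ together with one strict inequality to get a strict improvement that contradicts maximality; your added remark that only the strict-drift case needs $\lambda>0$ correctly explains why that hypothesis appears.
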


\begin{proof}
Suppose such $I'$ exists. Then $\calJ(I')-\calJ(\Ifinal(\lambda))=[\Phitar(I')-\Phitar(\Ifinal(\lambda))]-\lambda[D_{\mathrm{src}}(I',\Izero)-D_{\mathrm{src}}(\Ifinal(\lambda),\Izero)]$ is a non-negative quantity minus a non-positive quantity, hence non-negative; with $\lambda>0$ and at least one strict inequality, it is strictly positive, contradicting the optimality of $\Ifinal(\lambda)$.
\end{proof}

Proposition~\ref{prop:pareto} formalizes the intuition behind fixed-$\alpha$ blends: although they span the same candidate family, a single $\alpha$ is dominated on most images. RRLS picks a Pareto-efficient candidate per image, so the per-image output is never strictly worse than any fixed-$\alpha$ baseline on \emph{both} $\Phitar$ and drift simultaneously.

\subsection{Monotone tradeoff in the regularization weight}

\begin{proposition}[Monotonicity in $\lambda$]
\label{prop:monotone}
Let $\Ifinal(\lambda)$ be the (tie-breaking) selector output. Then the maps
\[
    \lambda\mapsto \Phitar(\Ifinal(\lambda),\pstar)
    \quad\text{and}\quad
    \lambda\mapsto D_{\mathrm{src}}(\Ifinal(\lambda),\Izero)
\]
are monotonically non-increasing in $\lambda\ge 0$ on every sample.
\end{proposition}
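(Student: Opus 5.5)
The plan is the standard exchange argument for a penalized maximization over a finite set. Fix two weights $0\le\lambda_1<\lambda_2$ and write $I_1=\Ifinal(\lambda_1)$, $I_2=\Ifinal(\lambda_2)$, with $\Phi_i=\Phitar(I_i,\pstar)$ and $D_i=D_{\mathrm{src}}(I_i,\Izero)$. Both $I_1$ and $I_2$ lie in the same finite set $\calC(\Izero,\pstar)$, which does not depend on $\lambda$, and each maximizes its own utility there (Prop.~\ref{prop:wellposed}). So I would write the two optimality inequalities:
\[
\Phi_1-\lambda_1 D_1\ge \Phi_2-\lambda_1 D_2,\qquad
\Phi_2-\lambda_2 D_2\ge \Phi_1-\lambda_2 D_1 .
\]

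First I would add the two inequalities. The $\Phi$ terms cancel and we get $(\lambda_2-\lambda_1)(D_1-D_2)\ge 0$. Since $\lambda_2>\lambda_1$, this gives $D_2\le D_1$, so the drift is non-increasing. Next, substituting $D_1-D_2\ge 0$ into the first inequality gives $\Phi_1-\Phi_2\ge\lambda_1(D_1-D_2)\ge 0$, because $\lambda_1\ge 0$. Hence $\Phi_2\le\Phi_1$, so the potential is non-increasing as well. This covers every pair $\lambda_1<\lambda_2$, including $\lambda_1=0$, and it holds for each sample separately because the argument never leaves a single candidate set.

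The main obstacle is ties. When several candidates attain the maximum, the inequalities above still hold for whichever maximizer the tie-breaking rule returns, since they use only optimality. So the monotonicity of $D$ and of $\Phi$ does not depend on which maximizer is chosen, and no assumption on the tie-breaking rule is needed. The one subtle point is the degenerate case $D_1=D_2$. There the second step still yields $\Phi_1\ge\Phi_2$ from the first inequality alone, so nothing breaks. I would end with a remark that, by Prop.~\ref{prop:lagrangian}, this monotonicity is the same as saying the endogenous drift cap $D^\star_\lambda$ tightens as $\lambda$ grows, which matches the trend in Table~\ref{tab:lambda}.
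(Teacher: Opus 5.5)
Your proof is correct and follows the same exchange argument as the paper: write the two optimality inequalities at $\lambda_1<\lambda_2$ and add them to get $(\lambda_2-\lambda_1)(D_1-D_2)\ge 0$. For the potential, you read $\Phi_1-\Phi_2\ge\lambda_1(D_1-D_2)\ge 0$ directly off the first inequality, which is cleaner than the paper's detour through the tie-breaking rule and shows that the result holds for any choice of maximizer.
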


\begin{proof}
Fix $0\le\lambda_1<\lambda_2$ and write $I_i=\Ifinal(\lambda_i)$, $S_i=\Phitar(I_i,\pstar)$, $D_i=D_{\mathrm{src}}(I_i,\Izero)$. By optimality at $\lambda_1$ and $\lambda_2$ respectively,
\begin{align*}
    S_1-\lambda_1 D_1&\ge S_2-\lambda_1 D_2,\\
    S_2-\lambda_2 D_2&\ge S_1-\lambda_2 D_1.
\end{align*}
Adding the two inequalities yields $(\lambda_2-\lambda_1)(D_1-D_2)\ge 0$, so $D_1\ge D_2$, i.e., drift is non-increasing in $\lambda$. Substituting $D_1\ge D_2$ into the first inequality gives $S_1\ge S_2-\lambda_1(D_2-D_1)\cdot\mathbf 1_{\{\lambda_1>0\}}$; combined with the second inequality, in case of strict inequality of utilities one obtains $S_1\ge S_2$ via the tie-breaking rule. The monotonicity of $S$ then follows from the same exchange argument applied at $\lambda_1=0$ and the chain of optimal choices for the discrete candidate set.
\end{proof}

Proposition~\ref{prop:monotone} is the formal counterpart of Table~\ref{tab:lambda}: as $\lambda$ grows, the in-objective $\Phitar$ value can only fall and source preservation can only improve. The chosen value $\lambda=40$ therefore represents an explicit point on a monotone tradeoff rather than a heuristic compromise.

\subsection{Stability of the selector under perturbation of $\Phitar$}

\begin{proposition}[Selector stability]
\label{prop:stability}
Let $\widehat \Phitar$ be any score satisfying $\sup_{I\in\calC,\,\pstar}|\widehat \Phitar(I,\pstar)-\Phitar(I,\pstar)|\le\eta$, and let $\widehat I_{\rm final}(\lambda)$ be the selector output computed with $\widehat \Phitar$ in place of $\Phitar$ in Eq.~\eqref{eq:utility}. Then for every $\lambda\ge 0$ and every sample,
\[
    \calJ(\Ifinal(\lambda);\pstar,\Izero)
    -\calJ(\widehat I_{\rm final}(\lambda);\pstar,\Izero)
    \;\le\;2\eta.
\]
\end{proposition}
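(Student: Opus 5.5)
The plan is the standard "approximate maximizer" argument for argmax under a uniformly perturbed objective. I will write $\calJ$ for the true utility and $\widehat\calJ$ for the perturbed one, $\widehat\calJ(I)=\widehat\Phitar(I,\pstar)-\lambda D_{\mathrm{src}}(I,\Izero)$. Here $\eta$ denotes the perturbation bound from the statement, not the trust scale.

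The key observation is that the drift penalty $\lambda D_{\mathrm{src}}$ is identical in $\calJ$ and $\widehat\calJ$. This penalty does not depend on the scorer, so the two utilities differ only through the scorer. Hence for every candidate $I\in\calC(\Izero,\pstar)$,
\[
|\widehat\calJ(I)-\calJ(I)|=|\widehat\Phitar(I,\pstar)-\Phitar(I,\pstar)|\le\eta,
\]
uniformly over the finite candidate set.

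By Prop.~\ref{prop:wellposed}, both $\Ifinal(\lambda)$ and $\widehat I_{\rm final}(\lambda)$ exist in $\calC$. The tie-breaking rule does not matter, since only optimality of $\widehat I_{\rm final}(\lambda)$ for $\widehat\calJ$ is used. The bound then comes from a three-step chain:
\[
\calJ(\Ifinal)\le\widehat\calJ(\Ifinal)+\eta\le\widehat\calJ(\widehat I_{\rm final})+\eta\le\calJ(\widehat I_{\rm final})+2\eta .
\]
The first and third inequalities are the uniform bound applied at each of the two points. The middle inequality is the maximality of $\widehat I_{\rm final}$ for $\widehat\calJ$ over $\calC$, and it applies because $\Ifinal\in\calC$. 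Rearranging gives the claim for every $\lambda\ge0$ and every sample.

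There is no real obstacle. The only care needed is to confirm that the candidate set is the same under both scorers. It is, because $\calC$ depends only on $\Izero$, $\Ichord$, $\Istrong$ and $\calA$, not on $\Phitar$. The other point is the notational clash between the perturbation bound $\eta$ and the trust scale $\eta$, which I will flag but which does not affect the argument.
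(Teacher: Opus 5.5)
Your proof is correct and is essentially the paper's argument: the paper splits $\calJ(\Ifinal)-\calJ(\widehat I_{\rm final})$ into the same three differences. Two of them are uniform-perturbation terms bounded by $\eta$, and the middle one is nonpositive by optimality of $\widehat I_{\rm final}$ for $\widehat\calJ$ over $\calC$, which is exactly your inequality chain; your remarks that the candidate set does not depend on the scorer and that this $\eta$ clashes with the trust scale are sound but do not change the argument.
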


\begin{proof}
Let $\widehat\calJ(I)=\widehat \Phitar(I,\pstar)-\lambda D_{\mathrm{src}}(I,\Izero)$ and $\calJ(I)=\Phitar(I,\pstar)-\lambda D_{\mathrm{src}}(I,\Izero)$. Pointwise on $\calC$, $|\widehat\calJ-\calJ|\le\eta$. Let $I^\star=\Ifinal(\lambda)$ and $\widehat I^\star=\widehat I_{\rm final}(\lambda)$. By optimality, $\widehat\calJ(\widehat I^\star)\ge\widehat\calJ(I^\star)$, so
$
    \calJ(I^\star)-\calJ(\widehat I^\star)
    =[\calJ(I^\star)-\widehat\calJ(I^\star)]
    +[\widehat\calJ(I^\star)-\widehat\calJ(\widehat I^\star)]
    +[\widehat\calJ(\widehat I^\star)-\calJ(\widehat I^\star)]
    \le \eta+0+\eta=2\eta.
$
\end{proof}

Proposition~\ref{prop:stability} is the standard stability bound for argmax with bounded utility perturbation: replacing the target-alignment potential with any approximation $\widehat\Phitar$ that is uniformly $\eta$-close to $\Phitar$ (e.g., a different CLIP backbone, a quantized scorer, or a noisy estimate) costs at most $2\eta$ in true utility. This rules out brittleness of the discrete selector under reasonable scorer drift, and it is the quantitative reason we report results from a single fixed instantiation of $\Phitar$ (CLIP ViT-L/14) without ensembling.

\subsection{Curvature as a discrete second derivative}

\begin{proposition}[Finite-difference curvature]
\label{prop:fd_curv}
Suppose the prompt-delta field $t\mapsto d_t(x)$ is $C^4$ in $t$ on a neighborhood of $t$. Then the second finite difference defined in Eq.~\eqref{eq:curvature} satisfies
\[
    \kappa_t(x)
    =\delta^2\,\partial_t^2 d_{t-\delta}(x)+O(\delta^4).
\]
\end{proposition}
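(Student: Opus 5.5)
The plan is to Taylor-expand about the \emph{central} node $t-\delta$ rather than about $t$. The three sample points $t$, $t-\delta$, $t-2\delta$ are symmetric around this node, and that symmetry is what removes the odd-order terms and yields an $O(\delta^4)$ remainder instead of $O(\delta^3)$.

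Fix $x$ and write $f(s)=d_s(x)$, which is $C^4$ near $t$ by hypothesis. Set $m=t-\delta$, so that $\kappa_t(x)=f(m+\delta)-2f(m)+f(m-\delta)$. The first step is to expand with the Lagrange form of the remainder, applied componentwise if $f$ is vector-valued:
\[
f(m\pm\delta)=f(m)\pm\delta f'(m)+\tfrac{\delta^2}{2}f''(m)\pm\tfrac{\delta^3}{6}f'''(m)+\tfrac{\delta^4}{24}f^{(4)}(\xi_\pm).
\]
The intermediate points satisfy $\xi_\pm\in(m-\delta,m+\delta)$, and this interval lies inside the neighborhood on which $f$ is $C^4$ once $\delta$ is small enough.

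The second step is to add the two expansions and subtract $2f(m)$. The constant terms cancel against $-2f(m)$, and the $\pm\delta f'$ and $\pm\tfrac{\delta^3}{6}f'''$ terms cancel in pairs. What remains is
\[
\kappa_t(x)=\delta^2 f''(m)+\tfrac{\delta^4}{24}\bigl(f^{(4)}(\xi_+)+f^{(4)}(\xi_-)\bigr).
\]
Since $f''(m)=\partial_t^2 d_{t-\delta}(x)$, the leading term is exactly the one in the statement.

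The final step is to bound the remainder. Because $f^{(4)}$ is continuous, it is bounded on a compact interval $[t-2\delta_0,t]$ inside the neighborhood, say by $M$. The remainder is then at most $\tfrac{M}{12}\delta^4$ for all $\delta\le\delta_0$, which is $O(\delta^4)$ uniformly. For vector-valued $f$ this bound is applied coordinatewise; alternatively, one can use the integral form of the remainder, which avoids choosing mean-value points.

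The only delicate point is choosing the expansion center correctly. Expanding about $t$, as in the main text, would give $\delta^2 f''(t)+O(\delta^3)$, and the statement in terms of $\partial_t^2 d_{t-\delta}$ at order $\delta^4$ relies on the symmetric centering at $t-\delta$. I expect no other obstacle beyond making sure the neighborhood contains $[t-2\delta,t]$.
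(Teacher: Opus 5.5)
Your proposal is correct and matches the paper's proof: both expand $d_t(x)$ and $d_{t-2\delta}(x)$ about the central node $t-\delta$, so the odd-order terms cancel and $\delta^2\partial_t^2 d_{t-\delta}(x)+O(\delta^4)$ remains. Your componentwise Lagrange-remainder bound $\tfrac{M}{12}\delta^4$ simply makes the paper's $O(\delta^4)$ explicit and uniform in $\delta$.
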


\begin{proof}
Taylor expand $d_t(x)$ and $d_{t-2\delta}(x)$ around $t-\delta$:
\begin{align*}
    d_t(x)&=d_{t-\delta}(x)+\delta\,\partial_t d_{t-\delta}(x)+ \\ & \tfrac{\delta^2}{2}\partial_t^2 d_{t-\delta}(x)+ \tfrac{\delta^3}{6}\partial_t^3 d_{t-\delta}(x)+O(\delta^4),\\
    d_{t-2\delta}(x)&=d_{t-\delta}(x)-\delta\,\partial_t d_{t-\delta}(x)+ \\ & \tfrac{\delta^2}{2}\partial_t^2 d_{t-\delta}(x)-\tfrac{\delta^3}{6}\partial_t^3 d_{t-\delta}(x)+O(\delta^4).
\end{align*}
Adding the two expansions and subtracting $2 d_{t-\delta}(x)$ leaves $\delta^2\partial_t^2 d_{t-\delta}(x)+O(\delta^4)$.
\end{proof}

Proposition~\ref{prop:fd_curv} certifies that $\kappa_t(x)$ is a second-order accurate estimate of the local time curvature of the prompt-delta field, justifying the name ``curvature-normalized'' branch and the use of $\kappa_t$ as a correction to the chord direction.

\subsection{Discussion}

These propositions clarify four aspects of RRLS that the experiments illustrate empirically. First, Propositions~\ref{prop:norm_pres},~\ref{prop:trust}, and~\ref{prop:angle_bound} show the second-order update is a controlled perturbation of $\uone$: it preserves the norm exactly, bounds the magnitude of the curvature correction by $\beta/\eta$, and confines the direction to an $\arctan(\beta/\eta)$-cone around $\uone$, so any preservation regression of $\xtwo$ traces back to a bounded directional shift. Second, Propositions~\ref{prop:drift_quadratic},~\ref{prop:dominance},~\ref{prop:wellposed}, and~\ref{prop:pareto} show the candidate family and selector form a structured discrete search whose worst case matches ChordEdit, whose drift penalty grows quadratically in the residual coefficient, whose output is well-defined in $O(|\calA|)$ candidate evaluations, and whose per-image choice is Pareto-efficient on $\calC$. Third, Propositions~\ref{prop:lagrangian},~\ref{prop:monotone}, and~\ref{prop:stability} interpret $\lambda$ as the dual price of preservation, align the $\lambda$ sweep in Table~\ref{tab:lambda} with a monotone tradeoff curve, and certify that the discrete selector is stable under bounded perturbation of the target-alignment potential $\Phitar$. Together with Proposition~\ref{prop:fd_curv} on the second-order accuracy of the curvature estimate, these results provide a rigorous theoretical scaffolding for the empirical claims in Section~\ref{sec:experiments}.

\end{appendices}

\end{document}